\documentclass{article}

\usepackage{iclr2027_conference,times}
\iclrfinalcopy 
\usepackage[utf8]{inputenc} 
\usepackage[T1]{fontenc}    
\usepackage{hyperref}       
\usepackage{url}            
\usepackage{booktabs}       
\usepackage{nicefrac}       
\usepackage{microtype}      
\usepackage{xcolor}         

\usepackage{algorithm}    
\usepackage{algorithmic}  

\usepackage{amsmath,amsfonts,bm}
\usepackage{amssymb}
\usepackage{mathtools}
\usepackage{amsthm}
\newtheorem{theorem}{Theorem} 
\title{Time Delay System for Sequence Modeling}

\author{
\textbf{Youshan Zhang}\textsuperscript{1}\thanks{This research was funded by the research
project of Chuzhou University (Grant No. 2025qd36).}, and 
\textbf{Jialu Li}\textsuperscript{2}\thanks{Corresponding author.} \\
\textsuperscript{1}School of Artificial Intelligence, Chuzhou University, Chuzhou, China\\
\textsuperscript{2}Department of Electrical and Computer Engineering, Boston University, Boston, USA\\[3pt]
{\small \texttt{youshan\_zhang@chzu.edu.cn, jli21@bu.edu}}
}

\begin{document}

\maketitle

\begin{abstract}

Existing sequence models share a critical limitation: they lack an explicit, interpretable mechanism for content-aware temporal recall. LSTMs compress everything into a single hidden state, thereby losing information about specific past events. Transformers attend to all tokens indiscriminately at quadratic cost, while state-space models like Mamba compress history into a fixed-dimensional state without explicit content-addressable memory. To address this gap, we introduce a novel framework grounded in nonlinear time-delay systems, in which the effective delay is dynamically estimated based on the semantic similarity between current and historical inputs. Our approach features three key innovations: (i) a differentiable semantic lag estimator that computes content-aware soft attention weights over history, producing a weighted historical context and similarity score; (ii) a diagonal state space with closed-form cumulative sum solution, achieving O(1) per-step complexity with no sequential loops; and (iii) an interpretable fuzzy semantic modulation mechanism that learns context-dependent rules for balancing instantaneous processing versus historical recall, where rule centers correspond to prototypical semantic contexts. Experimental results on text classification and generation outperform state-of-the-art methods.
\end{abstract}

\section{Introduction}
Existing sequence modeling architectures, such as Long Short-Term Memory (LSTM)~\citep{sherstinsky2020fundamentals}, transformers, and Mamba~\citep{gu2023mamba}, although achieving promising performance, share a limitation, that is, a lack of an explicit and interpretable mechanism for temporal recall. This limitation motivates our research on sequence modeling using nonlinear time-delay systems (TDS). In real-world sequential modeling problems, the relevance of historical information depends not only on temporal distance but also on semantic similarity between current and past inputs. Therefore, we propose a semantic delay-driven sequence modeling framework that explicitly estimates adaptive temporal recall based on semantic relevance. Instead of propagating through a single hidden state, as in recurrent neural networks (RNNs), or attending to all tokens, as in transformers, our proposed model dynamically determines which historical states influence the current computation more via differentiable semantic delay estimation and interpretable fuzzy semantic modulation.

Our approach presents three key contributions:
\begin{itemize}
    \item First, we propose a differentiable semantic lag estimator that computes contextual similarity scores over historical states and produces weighted semantic historical context through soft attention delay aggregation. This provides explicit semantic temporal recall and interpretable delay estimation.  
    \item Second, we design a hidden dynamics using a diagonal state-space structure with a closed-form cumulative sum solution, enabling fully parallelizable O(1) complexity per step without sequential recurrence or quadratic attention computation.
    \item Third, we introduce an interpretable fuzzy semantic modulation mechanism that adaptively balances current input processing and delayed historical recall using learnable fuzzy rules, providing interpretable semantic control.
\end{itemize}

\section{Related Work}
Sequence modeling for NLP enables models to process and generate text by capturing temporal dependencies. Architectures such as recurrent neural networks and state-space models have been proposed to improve the modeling of temporal dependencies~\citep{sherstinsky2020fundamentals}. 

\textbf{Recurrent Neural Networks and LSTM}. Early sequence modeling approaches primarily relied on RNNs, which operate via a hidden state that evolves over time. However, they suffer from vanishing and exploding gradient problems, limiting their ability to capture long-range dependencies~\citep{bengio1994learning}. To address this, the LSTM~\citep{hochreiter1997long} incorporates gating mechanisms to regulate information flow, improving its ability to model long-term dependencies. However, a core limitation of LSTM is that it captures temporal differences implicitly, without an explicit representation of how past information is delayed or propagated over time, as all time steps are compressed into the hidden state~\citep{pascanu2013difficulty}, making it difficult to interpret in human terms. This lack of explicit temporal semantics limits its ability to model human-like reasoning over the relationship between past contextual information and the present. 
Therefore, while LSTM can retain relevant information for longer sequences, it does not provide an interpretable mechanism for understanding how certain past inputs influence the current state. 

\textbf{State Space Models and Efficient Sequence Modeling}. State space models (SSMs) have recently become a powerful framework for sequence modeling~\citep{rangapuram2018deep,smith2208simplified,zhou2023deep}. The classic state-space model is formulated as a latent state that evolves through structured linear transitions, enabling a continuous-time interpretation of sequences~\citep{kalman1960new,kailath1980linear}. In recent years, the Structured State Space sequence model (S4) was proposed by Gu et al.~(\citeyear{gu2021efficiently}), which leveraged HiPPO-based projections to improve the model's ability to process longer-range sequences while remaining computationally efficient. Later, the Mamba architecture proposed by Gu and Dao~(\citeyear{gu2023mamba}) introduced selective state space models, where input-dependent gating facilitates state transitions, enabling linear-time complexity while maintaining strong performance across long sequences. 
Despite these advancements, the SSM-based models primarily achieved computational and structural improvements rather than conceptual reasoning. Specifically, the content representation of structured state transitions remains largely obscured. The learned transition matrices and input-dependent parameters~\citep{gu2021efficiently} do not explicitly encode semantically meaningful notions such as temporal delay, lag structure, or the relevance of past information, leaving the temporal dynamics still lacking interpretability. 

\textbf{Time-Delay Modeling in Dynamical Systems}. Time-delay systems have been studied in control theory and dynamical systems to model temporal dependencies with explicit lags~\citep{insperger2011semi,hale2013introduction}. In delay differential equations (DDEs), the system's evolution depends on past states with a defined time delay, enabling explicit modeling of the relationship between temporal memories independent of position in time ~\citep{smith2011introduction,waibel2013phoneme,snyder2015time}. Recent work continues to investigate the delay structures in complex dynamical systems. Peddinti et al.~\citep{peddinti2015time} first introduced the Time Delay Neural Network (TDNN) for efficiently modeling long-range temporal dependencies in speech recognition while avoiding the high training cost of recurrent models. Huang et al.~(\citeyear{huang2019deeper}) proposed deeper TDNNs for acoustic modeling by increasing the depth of temporal context layers. Ji et al.~(\citeyear{ji2024trainable}) further improved TDNNs by introducing trainable and continuous time delays as learnable parameters, allowing the network to directly identify both the magnitude of delays and the underlying nonlinear dynamics from data. 
However, these approaches remain limited in the context of NLP. Existing TDNN-based models are primarily evaluated on low-dimensional time-series data and have not been widely scaled to high-dimensional sequence modeling tasks such as language understanding or generation. Additionally, traditional time delay networks does not explicitly incorporate semantic meanings to moderate temporal dynamics~\citep{jatnika2019word2vec,sivakumar2020review}, revealing the insufficient semantic awareness and interpretability of recent NLP model designs.

From the review, we can see that current sequence modeling approaches lack an explicit, interpretable mechanism for modeling how semantic information propagates over time with variable delay. RNNs and LSTMs encode temporal dependencies implicitly through hidden states and gating mechanisms. State-space models improve efficiency but remain black boxes. Interpretability methods are largely post-hoc and do not reflect internal temporal processes. Time-delay systems and fuzzy logic offer interpretability but are not well-integrated with modern semantic representations in NLP. This gap motivates the exploration of new approaches that can integrate semantic understanding, temporal delay, and interpretability. Our proposed semantic delay-driven fuzzy state update model addresses these limitations by integrating semantic delay estimation with fuzzy logic-based control, enabling both effective and interpretable sequence modeling.

\begin{figure}
    \centering
    \includegraphics[width=1.\linewidth]{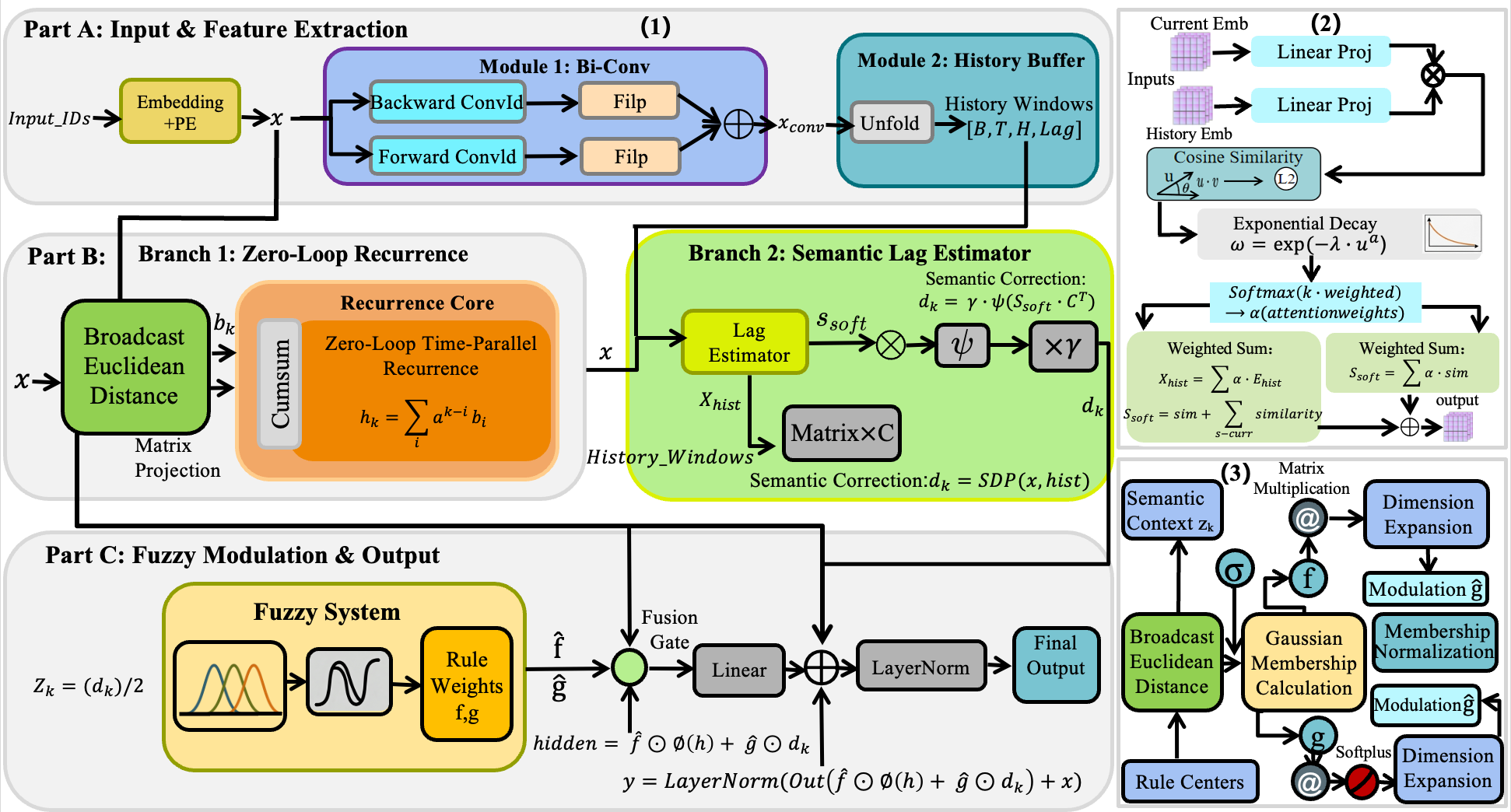}
    \caption{ (1) is the overall architecture of our proposed TDS model. (2) is the semantic lag estimator, and (3) is the fuzzy system.  }
    \vspace{-0.3cm}
    \label{fig:placeholder}
\end{figure}
\section{Methods}
\textbf{Motivation}. Existing sequence models struggle with a fundamental problem: they cannot explicitly decide when to look back in history based on semantic relevance. LSTMs compress everything into a single hidden state, losing specific past events. Transformers look at everything but at quadratic cost---inefficient and unfocused. Mamba improves efficiency but still lacks an interpretable mechanism for content-aware recall. Our time delay system directly addresses this by introducing an explicit semantic lag estimator that answers: "Which past token is most relevant to what I'm seeing now?" This allows the model to dynamically retrieve and integrate specific historical information---much like how humans recall relevant memories when triggered by current input. Combined with interpretable fuzzy rules that learn when to trust the present versus the past, our approach achieves both the expressiveness of attention and the efficiency of linear recurrence, while offering a clear, mathematical foundation grounded in delay differential equations.

\subsection{Problem Definition}

In conventional recurrent or state-space language models, the evolution of the current hidden state $\bm{s}(t)$, (where $t$ is the current time) is assumed to depend only on the current input $\bm{x}(t)$ and the previous state $\bm{s}(t-1)$.
However, natural language often exhibits \textbf{semantic delay} $\tau(t)$: words or phrases can influence
the meaning of future tokens after a variable time lag.  
To capture this, we introduce a \textbf{nonlinear time-delay dynamical system} as the foundation for sequence modeling.

\subsection{Mathematical Foundation}
Formally, let $\bm{s}(t) \in \mathbb{R}^d$ denote the current state ($d$ is the dimension of the state) and $\bm{x}(t) \in \mathbb{R}^m$ the current input ($e$.$g$., token embedding, $m$ is the dimension of the input). The system evolves according to a nonlinear time-delay differential equation:
\begin{equation}
    \dot{\bm{s}}(t)
    = \bm{f}\big(\bm{s}(t), \bm{s}(t-\tau(t)), \bm{x}(t)\big),
    \label{eq:nonlinear_delay_system}
\end{equation}  
where $\dot{\bm{s}}(t)$ is the change rate of the current state $s(t)$; $\tau(t) \in [0, L]$ denotes a time delay that can vary dynamically with the semantic context ($L$ is the maximum of $\tau$ ($\tau_{\max}$));  $\bm{s}(t-\tau(t))$ is delayed past state; and $\bm{f}$ is a nonlinear function that can simultaneously process the current state, delayed past state and current input.


\subsection{Functional Separation}
According to the decomposition theory of multivariate functions, any smooth multivariate vector function $\bm{f}$ can be decomposed at the reference state point $\bm{s}_0$ as following,
\begin{equation}\label{eq:f_sep}
\bm{f}\big(\bm{s}, \bm{s}_\tau, \bm{x}\big) =    \bm{f}\big(\bm{s}, \bm{s}_0, \bm{x}\big) + [\bm{f}\big(\bm{s}, \bm{s}_\tau, \bm{x}\big) - \bm{f}\big(\bm{s}, \bm{s}_0, \bm{x}\big)]
\end{equation}
where $\bm{s}_\tau = \bm{s}(t-\tau(t))$.
We can define two componentfunctions based on the above split:

\begin{itemize}
    \item Instantaneous Function: $F(s,x) = \bm{f}\big(\bm{s}, \bm{s}_0, \bm{x}\big)$, which captures the system's behavior when the delayed state is at its reference value.
    \item Delay Function: $G\big(\bm{s}, \bm{s}_\tau, \bm{x}\big) =  \bm{f}\big(\bm{s}, \bm{s}_\tau, \bm{x}\big) - \bm{f}\big(\bm{s}, \bm{s}_0, \bm{x}\big) $, which captures the deviation caused by the delayed state being different from the reference. 
\end{itemize}

Substituting these two definitions back to Eq.~\eqref{eq:f_sep}, we can get
\begin{equation}\label{eq:s_inter}
\dot{\bm{s}}(t)
    = F\big(\bm{s}(t), \bm{x}(t)\big) +   G\big(\bm{s}(t),  \bm{s}(t-\tau(t)), \bm{x}(t)\big). 
\end{equation}
Eq.~\eqref{eq:s_inter} is still complex since $G$ depends on both the current state $\bm{s}(t)$ and the delayed state $\bm{s}(t-\tau(t))$. For a more practical and interpretable model, we introduce a physically-motivated simplification: the primary function of the delay is to inject information from the past history, thus, we approximate the delay function as being independent of the current state as follows:
\begin{equation}
 G\big(\bm{s}(t),  \bm{s}(t-\tau(t)), \bm{x}(t)\big) \approx \tilde{G}(\bm{s}(t-\tau(t)), \bm{x}(t-\tau(t)))  
\end{equation}
Hence, we can rewrite Eq.~\eqref{eq:s_inter} as:
\begin{equation}\label{eq:s_dem}
\dot{\bm{s}}(t)
    = F\big(\bm{s}(t), \bm{x}(t)\big) +   G(\bm{s}(t-\tau(t)), \bm{x}(t-\tau(t)))  
\end{equation}
where the first term $F(\cdot)$ captures the instantaneous state evolution driven by the current input and the second term $G(\cdot)$ introduces delayed feedback that depends on past states and inputs. We drop the tilde on $G$ for notation simplicity. Eq.~\eqref{eq:s_dem} is the fundamental definition of a continuous nonlinear time-delay system.
It subsumes the standard state-space model when $\tau(t)\!=\!0$ and $G\!\equiv\!0$.

\subsection{Structure Parameterization}
To obtain a learnable form for neural network modeling, we introduce structured linear mappings and nonlinear activations as follows:
\begin{align}
F(\mathbf{s}(t),\mathbf{x}(t))  
&= \phi\!\big(\mathbf{A}\mathbf{s}(t) + \mathbf{B}\mathbf{x}(t)\big), \label{eq:F_term}\\
G(\mathbf{s}(t-\tau(t)),\mathbf{x}(t-\tau(t)))
&= \gamma\,\psi\!\big(\hat{s}(t)\,\mathbf{C}\,\bar{\mathbf{x}}(t-\tau(t))\big), \label{eq:G_term}
\end{align} 
where
$\mathbf{A},\mathbf{B},\mathbf{C}\in\mathbb{R}^{d\times d}$ are learnable matrices,
$\phi(\cdot)$ and $\psi(\cdot)$ are nonlinear activation functions (e.g., SiLU, ReLU),
$\gamma$ is a scalar weight controlling the influence of the delayed term,
and $\hat{s}(t)\in[0,1]$ is the semantic similarity between the current and delayed embeddings.
Substituting Eq.~\eqref{eq:F_term} and Eq.~\eqref{eq:G_term} into Eq.~\eqref{eq:s_dem} yields:
\begin{equation}
\dot{\mathbf{s}}(t)
= \phi\!\big(\mathbf{A}\mathbf{s}(t) + \mathbf{B}\mathbf{x}(t)\big)
+ \gamma\,\psi\!\big(\hat{s}(t)\,\mathbf{C}\,\mathbf{x}(t-\tau(t))\big).
\label{eq:nonlinear_delay_dde}
\end{equation}
This equation describes a nonlinear dynamical system with both instantaneous and delayed semantic interactions.

\subsection{Discretization}

For sequential data (\textit{e}.\textit{g}., text ), the system evolves in discrete time steps.
Let the time discretization be $t_k = k\Delta t$,
and approximate the time derivative in Eq.~\eqref{eq:nonlinear_delay_dde}
using the first-order forward difference:
\begin{equation}
\dot{\mathbf{s}}(t_k)
\approx
\frac{\mathbf{s}_{k} - \mathbf{s}_{k-1}}{\Delta t}.
\end{equation}
Substituting this into Eq.~\eqref{eq:nonlinear_delay_dde} gives:
\begin{align}
\frac{\mathbf{s}_{k} - \mathbf{s}_{k-1}}{\Delta t}
&= \phi\!\big(\mathbf{A}\mathbf{s}_{k-1} + \mathbf{B}\mathbf{x}_{k}\big)
 + \gamma\,\psi\!\big(\hat{s}_{k}\,\mathbf{C}\,\mathbf{x}_{k-\tau_k}\big).
\end{align}
Rearranging terms leads to the discrete-time update:
\begin{equation}
\mathbf{s}_{k}
= \mathbf{s}_{k-1}
+ \Delta t\,
\Big[
\phi\!\big(\mathbf{A}\mathbf{s}_{k-1} + \mathbf{B}\mathbf{x}_{k}\big)
+ \gamma\,\psi\!\big(\hat{s}_{k}\,\mathbf{C}\,\mathbf{x}_{k-\tau_k}\big)
\Big].
\label{eq:discrete_difference}
\end{equation}

When the sampling step is normalized to $\Delta t=1$, the additive integration term can be absorbed into the nonlinear mappings, yielding a simplified discrete nonlinear delay model:
\begin{equation}
\mathbf{s}_{k}
= \phi\!\big(\mathbf{A}\mathbf{s}_{k-1} + \mathbf{B}\mathbf{x}_{k}\big)
+ \gamma\,\psi\!\big(\hat{s}_{k}\,\mathbf{C}\,\mathbf{x}_{k-\tau_k}\big),
\label{eq:discrete_state_update}
\end{equation}
which is the core discrete-time realization of our nonlinear time-delay system.

\subsection{Semantic Lag Estimator}

To calculate the second delay term $\gamma\psi\!\big(\hat{s}_{k}\,\mathbf{C}\,\mathbf{x}_{k-\tau_k}\big)$, we first need to consider the semantic delay $\tau(t)$, which is determined by the similarity between the current input and its past history. We define a semantic lag estimator that computes the similarity $\mathcal{S}$ between the current embedding $\bm{x}_t$ and each historical embedding $\bm{x}_{t-k}$ within the window, where each lag $k = 1, \ldots, L$ and  $L$ is the maximum number of semantic delay $\tau$. Here we define $\mathcal{S}$, the cosine similarity as follows:
\begin{equation}
    \mathrm{sim}_k = 
    \frac{\langle \bm{x}_t, \bm{x}_{t-k} \rangle}
         {\|\bm{x}_t\|_2 \, \|\bm{x}_{t-k}\|_2 },
    \label{eq:cosine_sim}
\end{equation}
where $||\cdot||_2$ is the L2 norm.  To suppress far-away history contexts, each similarity value is weighted by a temporal decay function $
    w_k = \exp\!\left(-\lambda k^{\alpha}\right),
$, 
where $\lambda > 0$ and $\alpha > 0$ are hyperparameters controlling the rate of exponential decay. The combined semantic activation over the past history window $\mathcal{W}_t = \{\bm{x}_{t-k}\}_{k=1}^{L}$ is defined as 
$
    \tilde{s}_{t,k} = w_k \, \mathrm{sim}_k,
$,
and the normalized semantic confidence is
\begin{equation}
    \hat{s}_t = \sigma\!\left(\sum_{k=1}^{L} \tilde{s}_{t,k}\right),
    \label{eq:s_hat}
\end{equation}
where $\sigma(\cdot)$ denotes the sigmoid function. 

\paragraph{Hard selection (interpretable).} 
The effective discrete delay is selected as the smallest lag exceeding an input-dependent threshold:
\begin{equation}
    \tau_t = \min \left\{ k \in \{1,\ldots,L\} : \tilde{s}_{t,k} \geq \theta \cdot \sigma(\|\mathbf{x}_t\|) \right\},
    \label{eq:hard_tau}
\end{equation}
with corresponding similarity score $\hat{s}_t = \tilde{s}_{t,\tau_t}$. If no lag satisfies the condition, we set $\tau_t = 0$ and $\hat{s}_t = 0$.

\paragraph{Soft selection (differentiable).} 
To enable end-to-end training, we introduce a soft attention mechanism over lags:
\begin{align}
\alpha_{t,k} &:= \frac{\exp\big(\kappa \tilde s_{t,k}\big)}{\sum_{j=1}^L \exp\big(\kappa \tilde s_{t,j}\big)}\qquad(\kappa>0),\quad
\bar x_t^{\mathrm{hist}} &:= \sum_{k=1}^L \alpha_{t,k}\, x_{t-k}, \quad
\hat s_t^{\mathrm{soft}} &:= \sum_{k=1}^L \alpha_{t,k}\, \tilde s_{t,k}.\label{eq:soft-s}
\end{align}
Then, we have:
\begin{equation}
       \tau_t^{\mathrm{soft}} = \sum_{k=1}^{L} k \cdot \alpha_{t,k}.
    \label{eq:soft_attention}
\end{equation}
Substituting these expressions into Eq.~\eqref{eq:discrete_state_update} gives the continuous and differentiable soft-delay update:
\begin{equation}
\mathbf{s}_{k} = \underbrace{\phi\!\big(\mathbf{A}\mathbf{s}_{k-1} + \mathbf{B}\mathbf{x}_k\big)}_{\text{Base Update}} + \underbrace{\gamma\,\psi\!\big(\hat{s}_{k}^{\mathrm{soft}}\,\mathbf{C}\,\bar{\mathbf{x}}_{k}^{\mathrm{hist}}\big)}_{\text{Delay Correction }},
\label{eq:core_update}
\end{equation}
with \(s_t=\bar s_t+\Delta_t^{\mathrm{soft}}\). In the limit \(\kappa\to\infty\), the soft quantities concentrate on the minimal lag $\tilde{s}_{t,k}$, and $\tau_t^{\text{soft}}$ recovers the hard selection $\tau_t$, where $\mathbf{s}_k \in \mathbb{R}^d$ is the semantic state at time $k$, $\mathbf{x}_k \in \mathbb{R}^d$ is the current token embedding, $\mathbf{A}, \mathbf{B}, \mathbf{C} \in \mathbb{R}^{d \times d}$ are learnable matrices, $\phi, \psi$ are nonlinear activation functions (e.g., SiLU, tanh), $\gamma \in \mathbb{R}$ is a global scaling parameter, $\hat{s}_{k}^{\mathrm{soft}} \in \mathbb{R}$ is the soft similarity score from the Semantic Lag Estimator, and $\bar{\mathbf{x}}_{k}^{\mathrm{hist}} \in \mathbb{R}^d$ is the soft-weighted historical context.

\begin{algorithm}[!htb]
\caption{Soft Semantic Lag Estimator}
\label{alg:soft_lag_estimator_compact}
\begin{algorithmic}[1]
\REQUIRE 
  Current embeddings $\mathbf{X}_{\text{curr}} \in \mathbb{R}^{B \times H}$ or $\mathbb{R}^{B \times T \times H}$, History windows $\mathbf{H} \in \mathbb{R}^{B \times L \times H}$ or $\mathbb{R}^{B \times T \times L \times H}$, Parameters: projection $\mathbf{W}_{\text{proj}}$, decay $\lambda, \alpha$, temperature $\kappa$
\ENSURE 
  Attention weights $\boldsymbol{\alpha}$, weighted history $\bar{\mathbf{x}}^{\text{hist}}$, soft score $\hat{s}^{\text{soft}}$

\STATE \textbf{Initialize:} $w_u = \exp(-\lambda u^{\alpha}),\ u=1,\ldots,L$ \hfill $\triangleright$ Fixed decay weights

\STATE $\mathbf{X}_{\text{curr}}^{\text{proj}} = \mathbf{W}_{\text{proj}} \mathbf{X}_{\text{curr}}$ \hfill $\triangleright$ Project to similarity space
\STATE $\mathbf{H}^{\text{proj}} = \mathbf{W}_{\text{proj}} \mathbf{H}$

\STATE $\text{sim} = \dfrac{\mathbf{X}_{\text{curr}}^{\text{proj}} \cdot \mathbf{H}^{\text{proj}^{\top}}}{\|\mathbf{X}_{\text{curr}}^{\text{proj}}\| \cdot \|\mathbf{H}^{\text{proj}}\| + \varepsilon}$ \hfill $\triangleright$ Cosine similarity

\STATE $\tilde{s}_{k,u} = \text{sim}_{k,u} \cdot w_u$ \hfill $\triangleright$ Apply temporal decay

\STATE $\alpha_{k,u} = \dfrac{\exp(\kappa \tilde{s}_{k,u})}{\sum_j \exp(\kappa \tilde{s}_{k,j})}$ \hfill $\triangleright$ Softmax over history

\STATE $\bar{\mathbf{x}}^{\text{hist}} = \sum_u \alpha_{k,u} \mathbf{x}_{k-u}$ \hfill $\triangleright$ Weighted historical context
\STATE $\hat{s}^{\text{soft}} = \sum_u \alpha_{k,u} \tilde{s}_{k,u}$ \hfill $\triangleright$ Soft similarity score

\STATE \textbf{return} $\boldsymbol{\alpha},\ \bar{\mathbf{x}}^{\text{hist}},\ \hat{s}^{\text{soft}}$
\end{algorithmic}
\end{algorithm}

\subsection{Fuzzy Semantic System}
We define the delay correction term in the core equation Eq.~\eqref{eq:core_update} as:
\begin{equation}
\mathbf{d}_k = \gamma\,\psi\!\big(\hat{s}_{k}^{\mathrm{soft}}\,\mathbf{C}\,\bar{\mathbf{x}}_{k}^{\mathrm{hist}}\big),
\label{eq:delay_correction}
\end{equation}
and the equation simplifies to:
\begin{equation}
\mathbf{s}_k = \phi\!\big(\mathbf{A}\mathbf{s}_{k-1} + \mathbf{B}\mathbf{x}_k\big) + \mathbf{d}_k.
\label{eq:core_simplified}
\end{equation}


The core equation treats both terms with equal weight (coefficient 1 for each term). However, different semantic contexts require different balances. To address this, we introduce context-dependent modulation:

\begin{equation}
\mathbf{s}_k = f(\mathbf{z}_k) \cdot \phi\!\big(\mathbf{A}\mathbf{s}_{k-1} + \mathbf{B}\mathbf{x}_k\big) + g(\mathbf{z}_k) \cdot \mathbf{d}_k,
\label{eq:modulated_update}
\end{equation}

where $\mathbf{z}_k \in \mathbb{R}^d$ is the semantic context vector, and $f, g: \mathbb{R}^d \to \mathbb{R}$ are modulation functions.

\textbf{Semantic Context Definition}. The semantic context captures both the current and recently recalled information:

\begin{equation}
\mathbf{z}_k = \frac{\mathbf{s}_{k-1} + \mathbf{d}_k}{2}.
\label{eq:semantic_context}
\end{equation}

\textbf{Fuzzy Rule-Based Parameterization}. We employ a zero-order Takagi-Sugeno-Kang (TSK) fuzzy system with $R$ rules. Each rule $r$ has a prototypical semantic context $\mathbf{c}_r \in \mathbb{R}^d$,  a consequent $f_r \in \mathbb{R}$ for base update modulation, and a consequent $g_r \in \mathbb{R}$ for delay correction modulation.

\textbf{Membership Function}. The unnormalized membership is given by a Gaussian radial basis function:

\begin{equation}
\tilde{\mu}_r(\mathbf{z}) = \exp\left(-\frac{\|\mathbf{z} - \mathbf{c}_r\|^2}{2\sigma^2}\right),
\label{eq:unnormalized_membership}
\end{equation}

where $\sigma > 0$ is the width parameter. The normalized memberships are:

\begin{equation}
\mu_r(\mathbf{z}) = \frac{\tilde{\mu}_r(\mathbf{z})}{\sum_{j=1}^{R} \tilde{\mu}_j(\mathbf{z}) + \varepsilon}, \qquad \varepsilon = 10^{-8},
\label{eq:normalized_membership}
\end{equation}

which satisfy:
$
0 \leq \mu_r(\mathbf{z}) \leq 1, \quad \sum_{r=1}^{R} \mu_r(\mathbf{z}) = 1.
$

\textbf{subsubsection}{Defuzzifications}. The modulation outputs are weighted averages of the rule consequents:

\begin{equation}
f(\mathbf{z}) = \sum_{r=1}^{R} \mu_r(\mathbf{z}) \cdot f_r, \qquad
g(\mathbf{z}) = \sum_{r=1}^{R} \mu_r(\mathbf{z}) \cdot g_r.
\label{eq:defuzzification}
\end{equation}

\textbf{Positivity Constraint}. The delay modulation is constrained to be positive via the softplus function:

\begin{equation}
\tilde{g}(\mathbf{z}) = \operatorname{softplus}\big(g(\mathbf{z})\big) = \ln\!\left(1 + e^{g(\mathbf{z})}\right) > 0.
\label{eq:positivity_constraint}
\end{equation}

\textbf{Complete Fuzzy-Modulated State Update}. Substituting all components, the complete system is:
\begin{equation}
    f_k = \sum_{r=1}^{R} \mu_r(\mathbf{z}_k) \, f_r, \quad
g_k = \ln\!\left(1 + \exp\!\left(\sum_{r=1}^{R} \mu_r(\mathbf{z}_k) \, g_r\right)\right).
\end{equation}
We can get the final updated core function as follows:
\begin{equation}
    \mathbf{s}_k = f_k \cdot \phi\!\big(\mathbf{A}\mathbf{s}_{k-1} + \mathbf{B}\mathbf{x}_k\big) + g_k \cdot \mathbf{d}_k\label{eq:s_k_full}.
\end{equation}

\begin{algorithm}[!t]
\caption{Fuzzy Semantic System}
\label{alg:fuzzy_system_compact}
\begin{algorithmic}[1]
\REQUIRE 
  Semantic context $\mathbf{z} \in \mathbb{R}^{B \times H}$ or $\mathbb{R}^{B \times T \times H}$, Parameters: rule centers $\{\mathbf{c}_r\}_{r=1}^{R} \subset \mathbb{R}^{H}$, 
  rule weights $\{f_r\}_{r=1}^{R}, \{g_r\}_{r=1}^{R} \subset \mathbb{R}$, width $\sigma > 0$
\ENSURE 
  Modulation vectors $\hat{\mathbf{f}}, \hat{\mathbf{g}}$ (same shape as $\mathbf{z}$)

\STATE $\mu_r(\mathbf{z}) = \dfrac{\exp\left(-\dfrac{\|\mathbf{z} - \mathbf{c}_r\|_2^2}{2\sigma^2}\right)}{\sum_{j=1}^{R} \exp\left(-\dfrac{\|\mathbf{z} - \mathbf{c}_j\|_2^2}{2\sigma^2}\right) + \varepsilon}$ \hfill $\triangleright$ Normalized membership

\STATE $\hat{f}_{\text{scalar}} = \sum_{r=1}^{R} \mu_r(\mathbf{z}) \cdot f_r$ \hfill $\triangleright$ Weighted average for $f$

\STATE $\hat{g}_{\text{scalar}} = \text{softplus}\!\left(\sum_{r=1}^{R} \mu_r(\mathbf{z}) \cdot g_r\right)$ \hfill $\triangleright$ Weighted average for $g$ (non-negative)

\STATE $\hat{\mathbf{f}} = \hat{f}_{\text{scalar}} \cdot \mathbf{1}_H,\quad \hat{\mathbf{g}} = \hat{g}_{\text{scalar}} \cdot \mathbf{1}_H$ \hfill $\triangleright$ Expand to hidden dimension

\RETURN $\hat{\mathbf{f}}, \hat{\mathbf{g}}$
\end{algorithmic}
\end{algorithm}

\subsection{Time Delay System}
In Eq.~\eqref{eq:s_k_full}, we need to recursively get $s_k$ using the last state $s_{k-1}$. To achieve fully vectorized computation without sequential loops, we restrict the state transition matrix $\mathbf{A}$ to be diagonal. Let $\mathbf{a} = \operatorname{diag}(\mathbf{A}) \in \mathbb{R}^d$ denote the diagonal elements. For $\mathbf{A}\mathbf{s}_{k-1} + \mathbf{B}\mathbf{x}_k$, we apply the linear recurrence with diagonal state transition. The latent state $\mathbf{h}_k \in \mathbb{R}^d$ evolves according to the element-wise recurrence:
\begin{equation}
    \mathbf{h}_k = \mathbf{a} \odot \mathbf{h}_{k-1} + \mathbf{b}_k, \quad \mathbf{h}_0 = \mathbf{0},
    \label{eq:diag_recurrence}
\end{equation}

where $\odot$ denotes element-wise (Hadamard) multiplication, and $\mathbf{b}_k = \mathbf{B} \mathbf{x}_k \in \mathbb{R}^d$ is the projected input.

\textbf{Closed-Form Solution via Cumulative Sum}. Unrolling the recurrence yields the closed-form expression:

\begin{equation}
    \mathbf{h}_k = \sum_{i=1}^{k} \left( \mathbf{a}^{\odot (k-i)} \odot \mathbf{b}_i \right),
    \label{eq:closed_form}
\end{equation}

where $\mathbf{a}^{\odot p}$ denotes element-wise exponentiation: $(\mathbf{a}^{\odot p})_j = (a_j)^p$. This can be computed efficiently using $\operatorname{cumsum}$ operation:

\begin{equation}
    \mathbf{h}_k = \mathbf{a}^{\odot k} \odot \operatorname{cumsum}_{i=1}^{k} \left( \mathbf{a}^{\odot -i} \odot \mathbf{b}_i \right),
    \label{eq:cumsum_form}
\end{equation}

where $\mathbf{a}^{\odot -i}$ denotes element-wise reciprocal: $(\mathbf{a}^{\odot -i})_j = (a_j)^{-i}$.

\textbf{Final State Update}. The semantic state $\mathbf{s}_k$ is obtained by combining the transformed latent state and delay correction with fuzzy modulation:

\begin{equation}
    \mathbf{s}_k = \hat{\mathbf{f}}_k \odot \phi(\mathbf{h}_k) + \hat{\mathbf{g}}_k \odot \mathbf{d}_k,
    \label{eq:final_state}
\end{equation}

where $\phi$ is a nonlinear activation function (e.g., SiLU).


\begin{algorithm}[!htb]
\caption{Semantic Delay Processor (Full version is shown in Alg.~\ref{alg:semantic_delay_processor}.)}
\label{alg:semantic_delay_processor_compact}
\begin{algorithmic}[1]
\REQUIRE Input tokens $I \in \mathbb{N}^{B \times T}$, parameters $\Theta$
\ENSURE Final state $\mathbf{s}_{\text{final}} \in \mathbb{R}^{B \times H}$, average state $\bar{\mathbf{s}} \in \mathbb{R}^{B \times H}$

\STATE $\mathbf{X} = \text{Dropout}(\text{Embed}(I) + \text{PosEnc}(1:T))$ \hfill $\triangleright$ Embedding
\STATE $\mathbf{X}_{\text{conv}} = \text{GELU}(\text{Conv}_{\text{bi}}(\mathbf{X}))$ \hfill $\triangleright$ Local context

\STATE $(\boldsymbol{\alpha},\ \bar{\mathbf{X}}^{\text{hist}},\ \hat{s}) = \text{LagEstimator}(\mathbf{X},\ \text{History}(\mathbf{X}))$ \hfill $\triangleright$ Algorithm 1

\STATE $\mathbf{b} = \mathbf{B}\mathbf{X} \in \mathbb{R}^{B \times T \times H}$ \hfill $\triangleright$ Projected input
\STATE $\mathbf{d} = \gamma \tanh(\hat{s} \odot \mathbf{C}\bar{\mathbf{X}}^{\text{hist}})$ \hfill $\triangleright$ Delay correction

\STATE $\mathbf{a} = \text{diag}(\mathbf{A})$, $\mathbf{a}^{\text{pow}}_t = \mathbf{a}^{\odot t}$
\STATE $\mathbf{h} = \mathbf{a}^{\odot t} \odot \text{CumSum}\!\left(\mathbf{b} \odot \mathbf{a}^{\odot -t}\right)$ \hfill $\triangleright$ Parallel scan

\STATE $(\hat{\mathbf{f}},\ \hat{\mathbf{g}}) = \text{FuzzySystem}(\mathbf{d}/2)$ \hfill $\triangleright$ Algorithm 2

\STATE $\mathbf{s} = \hat{\mathbf{f}} \odot \text{SiLU}(\mathbf{h}) + \hat{\mathbf{g}} \odot \mathbf{d}$ \hfill $\triangleright$ State update
\STATE $\mathbf{s} = \text{LayerNorm}(\mathbf{W}_{\text{out}}\mathbf{s} + \mathbf{X})$ \hfill $\triangleright$ Residual + output

\RETURN $\mathbf{s}[:,-1,:],\ \text{Mean}(\mathbf{s}, \text{dim}=1)$
\end{algorithmic}
\end{algorithm}

\section{Experiments}
\subsection{Datasets and implementations}
\label{sec:data}
In this paper, we test our TDS model on the \textbf{IMDB}~\citep{maas2011learning} dataset for text classification and the \textbf{CNN / DailyMail}~\citep{nallapati2016abstractive} dataset for text generation. IMDB has 25,000 training and 25,000 test samples, while CNN / DailyMail contains over 300,000 unique news samples. We use the full dataset for text classification training and testing, while for the generation task, we use 80,000 samples for training and 2,000 each for validation and testing. The detailed algorithms for text classification and generation are shown in Alg.~\ref{alg:text_classification} and Alg.~\ref{alg:text_generation} in the Appendix.

We run our experiments using PyTorch on an Nvidia H100 80G GPU and use the Adam optimizer. We set the learning rate to 3e-4 for text classification and 5e-5 for text generation; the batch size is 32; the number of iterations is 50 for classification and 100 for generation. We set the dropout rate to 0.3 to improve model generalization and avoid overfitting during training. Our proposed TDS model employs a hidden dimension of 256, a maximum semantic delay length of 30, and 6 fuzzy inference rules for adaptive state updating. \footnote{Source code is avaliable at \url{https://github.com/YoushanZhang/TDS}.}.

\subsection{Results}
\label{sec:results}

For the sequence modeling experiments, we also compare the performance of our TDS model with other state-of-the-art models.
As shown in Tables~\ref{tab:classification} and \ref{tab:generation}, our proposed TDS model outperforms state-of-the-art methods on text classification and generation tasks, achieving higher accuracy and ROUGE scores while maintaining a relatively small parameter count and efficient training time. However, due to the limited number of benchmark experiments, the model's broader generalizability remains untested. Meanwhile, the TDS model is also more sensitive to hyperparameter tuning.

\begin{table}[htbp]
    \centering
    \caption{Number of parameters and accuracy of the IMDB dataset.}
    \label{tab:classification}
    \begin{tabular}{lccc}
        \hline
        Model & Parameters(M)  & Accuracy & Total Training Time(s) \\
        \hline
        LSTM~\citep{qaisar2020sentiment}& 2.8 & 89.90 & 2300 \\
        Mamba~\citep{gu2023mamba}& 3.2  & 83.00 & 2300 \\
        RoBERTa-base~\citep{abidin2025comprehensive}& 125  & 94.13 & - \\
        DistilBERT-base~\citep{abidin2025comprehensive}& 66  & 92.13 & - \\
        \hline
        TDS & \textbf{2.6} & \textbf{95.35} & \textbf{2100} \\
        \hline
    \end{tabular}
\end{table}

\begin{table}[htbp]
    \centering
    \caption{Number of parameters and ROUGE score of the CNN/DailyMail dataset.}
    \label{tab:generation}
    \resizebox{\linewidth}{!}{%
    \begin{tabular}{lccc}
        \hline
        Model & Parameters(M)  & ROUGE-L & Total Training Time(s) \\
        \hline
        LSTM \citep{abdulsahib2022convolutional}& 26.5 &  38.45 & 16558 \\
        BERT \citep{maharditha2025comparative}& 24.30  & 40.48 & 18260 \\
        Mamba ~\citep{do2025discrete}& 14.73  &  29.60 & 16725 \\
        BART CNN-large ~\citep{rani2025deep}& 139  &  24.58 & - \\
        T5-base ~\citep{rani2025deep}& 220  &  23.54 & - \\
        \hline
        TDS & 16.98 & 42.63& 17633 \\
        \hline
    \end{tabular}
    }
    \vspace{-0.3cm}
\end{table}

\begin{figure}[H]
    \centering
    \includegraphics[width=1\linewidth]{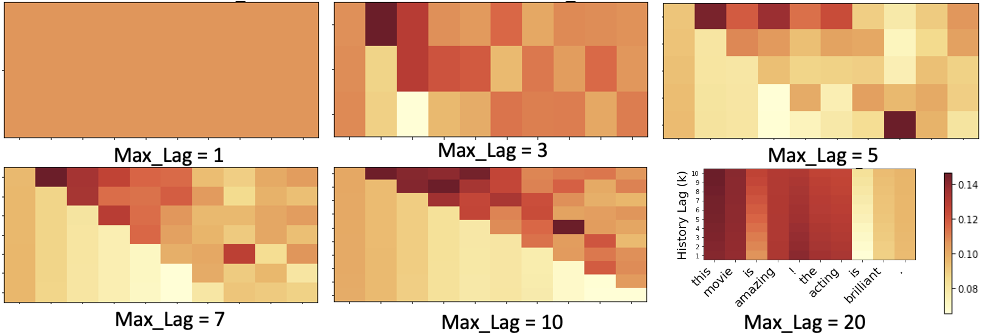}
    \vspace{-0.6cm}
    \caption{An example of how the maximum number of lag $L$ impacts TDS model performance on the sample ``this movie is amazing! the acting is brilliant.".}
    \label{fig:attention}
    \vspace{-0.3cm}
\end{figure}

We also visualize how different numbers of the maximum number of lags affect model performance, as shown in Fig.~\ref{fig:attention}, and Fig.~\ref{fig:lag} in the Appendix. The X-axis is a sample sentence of ten words: ``this movie is amazing ! the acting is brilliant.  The y-axis is the maximum number of historical lags. Given an example when Max\_Lag = 10, the word ``amazing" (position 3) attends to a maximum of the past 10 historical words. The figure shows that ``movie" has the biggest impact on the word ``amazing". We can see from Fig.~\ref{fig:attention} that as the number of lags increases, the importance of specific historical words to the current sequence becomes more prominent. While attention values can vary during training with different lags $L$, the most relevant historical context will be consistently highlighted by the attention mechanism. Fig.~\ref{fig:lag} shows that increasing lags, although with fluctuation, will increase the model accuracy on text classification. 

\section{Conclusion}
In this paper,  we introduced a novel sequence modeling framework based on time-delay systems (TDS), where a differentiable semantic lag estimator dynamically retrieves relevant history via content-aware attention. Our Semantic Delay Processor (SDP) explicitly extracts memories and follows an interpretable context-dependent processing rule. 
Experiments demonstrate that TDS outperforms RNN, Transformer, and Mamba on sentiment classification and text summarization tasks, while scaling linearly to million-length sequences. The learned fuzzy rule centers correspond to semantic contexts, offering interpretability. Our work explores time-delay systems as an efficient and interpretable mechanism for sequence modeling, which has proven effective for text tasks. It has the potential to be applied to solving more real-life problems, such as image and audio processing and generation, given its efficiency and offering greater interpretability.








\subsection*{AI use statement}
In preparing this version of the manuscript, we used generative AI tools to assist with conversion to the ICLR 2027 LaTeX format, package and file-path consistency checks, and the identification of compilation issues. We did not use generative AI tools in this formatting process to generate experimental data, conduct statistical analyses, select or fabricate references, or draw scientific conclusions. We reviewed all AI-assisted changes and verified the resulting LaTeX source and compiled document. We take responsibility for the final content of this work, including all text, claims, results, and artifacts.

\bibliography{ref}

@String{Computer = "{IEEE} Computer" }

@String{Springer = "Springer-Verlag" }

@article{sherstinsky2020fundamentals,
  title={Fundamentals of recurrent neural network (RNN) and long short-term memory (LSTM) network},
  author={Sherstinsky, Alex},
  journal={Physica d: Nonlinear phenomena},
  volume={404},
  pages={132306},
  year={2020},
  publisher={Elsevier}
}

@article{hochreiter1997long,
  title={Long short-term memory},
  author={Hochreiter, Sepp and Schmidhuber, J{\"u}rgen},
  journal={Neural computation},
  volume={9},
  number={8},
  pages={1735--1780},
  year={1997},
  publisher={MIT press}
}

@article{bengio1994learning,
  title={Learning long-term dependencies with gradient descent is difficult},
  author={Bengio, Yoshua and Simard, Patrice and Frasconi, Paolo},
  journal={IEEE transactions on neural networks},
  volume={5},
  number={2},
  pages={157--166},
  year={1994},
  publisher={IEEE}
}

@inproceedings{pascanu2013difficulty,
  title={On the difficulty of training recurrent neural networks},
  author={Pascanu, Razvan and Mikolov, Tomas and Bengio, Yoshua},
  booktitle={International conference on machine learning},
  pages={1310--1318},
  year={2013},
  organization={Pmlr}
}

@inproceedings{zhou2023deep,
  title={Deep latent state space models for time-series generation},
  author={Zhou, Linqi and Poli, Michael and Xu, Winnie and Massaroli, Stefano and Ermon, Stefano},
  booktitle={International Conference on Machine Learning},
  pages={42625--42643},
  year={2023},
  organization={PMLR}
}

@article{smith2208simplified,
  title={Simplified state space layers for sequence modeling, 2023},
  author={Smith, Jimmy TH and Warrington, Andrew and Linderman, Scott W},
  journal={URL https://arxiv. org/abs/2208.04933},
  volume={2208},
  pages={04933},
  year={2023}
}

@article{gu2023mamba,
  title={Mamba: Linear-time sequence modeling with selective state spaces},
  author={Gu, Albert and Dao, Tri},
  journal={arXiv preprint arXiv:2312.00752},
  year={2023}
}

@misc{kalman1960new,
  title={A new approach to linear filtering and prediction problems},
  author={Kalman, Rudolph Emil},
  year={1960}
}

@book{kailath1980linear,
  title={Linear systems},
  author={Kailath, Thomas},
  volume={156},
  year={1980},
  publisher={Prentice-Hall Englewood Cliffs, NJ}
}

@article{gu2021efficiently,
  title={Efficiently modeling long sequences with structured state spaces},
  author={Gu, Albert and Goel, Karan and R{\'e}, Christopher},
  journal={arXiv preprint arXiv:2111.00396},
  year={2021}
}

@article{rangapuram2018deep,
  title={Deep state space models for time series forecasting},
  author={Rangapuram, Syama Sundar and Seeger, Matthias W and Gasthaus, Jan and Stella, Lorenzo and Wang, Yuyang and Januschowski, Tim},
  journal={Advances in neural information processing systems},
  volume={31},
  year={2018}
}

@book{hale2013introduction,
  title={Introduction to functional differential equations},
  author={Hale, Jack K and Lunel, Sjoerd M Verduyn},
  volume={99},
  year={2013},
  publisher={Springer Science \& Business Media}
}

@book{insperger2011semi,
  title={Semi-discretization for time-delay systems: stability and engineering applications},
  author={Insperger, Tam{\'a}s and St{\'e}p{\'a}n, G{\'a}bor},
  volume={178},
  year={2011},
  publisher={Springer Science \& Business Media}
}

@book{smith2011introduction,
  title={An introduction to delay differential equations with applications to the life sciences},
  author={Smith, Hal L},
  volume={57},
  year={2011},
  publisher={springer New York}
}

@incollection{waibel2013phoneme,
  title={Phoneme recognition using time-delay neural networks},
  author={Waibel, Alexander and Hanazawa, Toshiyuki and Hinton, Geoffrey and Shikano, Kiyohiro and Lang, Kevin J},
  booktitle={Backpropagation},
  pages={35--61},
  year={2013},
  publisher={Psychology Press}
}

@inproceedings{snyder2015time,
  title={Time delay deep neural network-based universal background models for speaker recognition},
  author={Snyder, David and Garcia-Romero, Daniel and Povey, Daniel},
  booktitle={2015 IEEE Workshop on Automatic Speech Recognition and Understanding (ASRU)},
  pages={92--97},
  year={2015},
  organization={IEEE}
}

@article{huang2019deeper,
  title={Deeper time delay neural networks for effective acoustic modelling},
  author={Huang, Xiaorong and Zhang, Weibin and Xu, Xiangmin and Yin, Ruixiang and Chen, Dongpeng},
  journal={Journal of Physics: Conference Series},
  volume={1229},
  number={1},
  pages={012076},
  year={2019},
  organization={IOP Publishing}
}

@inproceedings{peddinti2015time,
  title={A time delay neural network architecture for efficient modeling of long temporal contexts},
  author={Peddinti, Vijayaditya and Povey, Daniel and Khudanpur, Sanjeev},
  booktitle={Proc. Interspeech 2015},
  pages={3214--3218},
  year={2015}
}

@article{ji2024trainable,
  title={Trainable delays in time delay neural networks for learning delayed dynamics},
  author={Ji, Xunbi A and Orosz, G{\'a}bor},
  journal={IEEE Transactions on Neural Networks and Learning Systems},
  volume={36},
  number={3},
  pages={5219--5229},
  year={2024},
  publisher={IEEE}
}

@article{jatnika2019word2vec,
  title={Word2vec model analysis for semantic similarities in english words},
  author={Jatnika, Derry and Bijaksana, Moch Arif and Suryani, Arie Ardiyanti},
  journal={Procedia Computer Science},
  volume={157},
  pages={160--167},
  year={2019},
  publisher={Elsevier}
}

@inproceedings{sivakumar2020review,
  title={Review on word2vec word embedding neural net},
  author={Sivakumar, Soubraylu and Videla, Lakshmi Sarvani and Kumar, T Rajesh and Nagaraj, J and Itnal, Shilpa and Haritha, D},
  booktitle={2020 international conference on smart electronics and communication (ICOSEC)},
  pages={282--290},
  year={2020},
  organization={IEEE}
}

@inproceedings{maas2011learning,
  title={Learning word vectors for sentiment analysis},
  author={Maas, Andrew and Daly, Raymond E and Pham, Peter T and Huang, Dan and Ng, Andrew Y and Potts, Christopher},
  booktitle={Proceedings of the 49th annual meeting of the association for computational linguistics: Human language technologies},
  pages={142--150},
  year={2011}
}

@inproceedings{nallapati2016abstractive,
  title={Abstractive text summarization using sequence-to-sequence rnns and beyond},
  author={Nallapati, Ramesh and Zhou, Bowen and Dos Santos, Cicero and Gul{\c{c}}ehre, {\c{C}}a{\u{g}}lar and Xiang, Bing},
  booktitle={Proceedings of the 20th SIGNLL conference on computational natural language learning},
  pages={280--290},
  year={2016}
}

@article{abdulsahib2022convolutional,
  title={Convolutional Recurrent Neural Networks for Text Lecture Summarization},
  author={Abdulsahib, Muna Ghazi and others},
  journal={Iraqi Journal of Computers, Communications, Control and Systems Engineering},
  volume={22},
  number={2},
  pages={3},
  year={2022}
}

@inproceedings{maharditha2025comparative,
  title={Comparative Performance Analysis of LSTM and Transformer Models for Abstractive Text Summarization},
  author={Maharditha, I Made Denis and Lhaksmana, Kemas M},
  booktitle={2025 International Conference on Data Science and Its Applications (ICoDSA)},
  pages={213--218},
  year={2025},
  organization={IEEE}
}

@inproceedings{do2025discrete,
  title={Discrete diffusion language model for efficient text summarization},
  author={Do, Duc Anh and Tuan, Luu Anh and Buntine, Wray and others},
  booktitle={Findings of the Association for Computational Linguistics: NAACL 2025},
  pages={6278--6290},
  year={2025}
}

@inproceedings{qaisar2020sentiment,
  title={Sentiment analysis of IMDb movie reviews using long short-term memory},
  author={Qaisar, Saeed Mian},
  booktitle={2020 2nd International Conference on Computer and Information Sciences (ICCIS)},
  pages={1--4},
  year={2020},
  organization={IEEE}
}

@article{abidin2025comprehensive,
  title={A comprehensive benchmarking pipeline for transformer-based sentiment analysis using cross-validated metrics},
  author={Abidin, Dodo Zaenal and Afuan, Lasmedi and Toscany, Afrizal Nehemia and Nurhadi, Nurhadi},
  journal={Jurnal Teknik Informatika (Jutif)},
  volume={6},
  number={4},
  pages={1797--1810},
  year={2025}
}

@article{rani2025deep,
  title={Deep learning for text summarization using NLP for automated news digest},
  author={Rani Krishna, KM and Somasundaram, K and Arulmozhivarman, P and Immanuel, Sarah A and Rajkumar, ER},
  journal={Scientific Reports},
  volume={15},
  number={1},
  pages={36343},
  year={2025},
  publisher={Nature Publishing Group UK London}
}
\bibliographystyle{iclr2027_conference}

\newpage
\appendix
\renewcommand{\theHfigure}{appendix.\Alph{section}.\arabic{figure}}
\renewcommand{\theHtable}{appendix.\Alph{section}.\arabic{table}}
\section{Appendix}

\subsection{Application algorithm}
The detailed algorithms for text classification and text generation are shown in Alg.~\ref{alg:text_classification} and Alg.~\ref{alg:text_generation}. 

\begin{algorithm}[!htb]
\caption{Text Classification with Semantic Delay Processor}
\label{alg:text_classification}
\begin{algorithmic}[1]
\REQUIRE Input $I \in \mathbb{N}^{B \times T}$, classes $C$
\ENSURE Probabilities $\mathbf{p} \in \mathbb{R}^{B \times C}$, predictions $\hat{y} \in \mathbb{N}^{B}$

\STATE $\mathbf{X}_{\text{pool}} = \text{Mean}(\text{Emb}(I) + \text{PosEnc})$ \hfill $\triangleright$ $[B,H]$
\STATE $(\mathbf{s}_{\text{final}},\ \bar{\mathbf{s}}) = \text{Processor}(I)$ \hfill $\triangleright$ Alg. \ref{alg:semantic_delay_processor_compact}
\STATE $\mathbf{s} = \text{LayerNorm}(\mathbf{W}_{\text{fuse}}[\mathbf{s}_{\text{final}};\bar{\mathbf{s}}] + \mathbf{W}_{\text{short}}\mathbf{X}_{\text{pool}})$
\STATE $\mathbf{p} = \text{Softmax}(\text{MLP}(\mathbf{s}))$ \hfill $\triangleright$ $\text{MLP}: H \to H/2 \to C$
\RETURN $\mathbf{p},\ \arg\max(\mathbf{p})$
\end{algorithmic}
\end{algorithm}

\begin{algorithm}[!htb]
\caption{Text Generation with Semantic Delay Processor}
\label{alg:text_generation}
\begin{algorithmic}[1]
\REQUIRE Prefix $I_{\text{p}} \in \mathbb{N}^{B \times T_{\text{p}}}$, length $T_{\text{gen}}$, temp $\tau$, top-$k$
\ENSURE Generated sequences $I_{\text{gen}} \in \mathbb{N}^{B \times (T_{\text{p}} + T_{\text{gen}})}$

\STATE $I_{\text{gen}} \leftarrow I_{\text{p}},\quad \mathbf{s} \leftarrow \text{Processor}(I_{\text{p}})$

\FOR{$t = 1$ \textbf{to} $T_{\text{gen}}$}
  \STATE $\mathbf{x} = \text{Emb}(I_{\text{gen}}[:,-1]) + \text{PosEnc}(T_{\text{p}}+t)$
  \STATE $\mathbf{b} = \mathbf{B}\mathbf{x},\quad \mathbf{d} = \gamma \tanh(\hat{s} \odot \mathbf{C}\bar{\mathbf{X}}^{\text{hist}})$
  \STATE $\mathbf{h} = \mathbf{a}^{\odot t} \odot \text{CumSum}(\mathbf{b} \odot \mathbf{a}^{\odot -t})$
  \STATE $(\mathbf{f},\mathbf{g}) = \text{Fuzzy}(\mathbf{d}/2),\quad \mathbf{s} = \mathbf{f} \odot \text{SiLU}(\mathbf{h}) + \mathbf{g} \odot \mathbf{d}$
  \STATE $\mathbf{z} = \mathbf{W}_{\text{out}}\mathbf{s},\quad \mathbf{z}[\neg\text{TopK}(\mathbf{z}/\tau,k)] = -\infty$
  \STATE $w \sim \text{Categorical}(\text{Softmax}(\mathbf{z}))$
  \STATE $I_{\text{gen}} \leftarrow \text{Concat}(I_{\text{gen}}, w)$
\ENDFOR

\RETURN $I_{\text{gen}}$
\end{algorithmic}
\end{algorithm}

\begin{algorithm}[!htb]
\caption{Semantic Delay Processor (Fully Version)}
\label{alg:semantic_delay_processor}
\begin{algorithmic}[1]
\REQUIRE 
  Input tokens $I \in \mathbb{N}^{B \times T}$ (batch size $B$, sequence length $T$), \\
  Vocabulary size $V$, hidden dimension $H$, maximum lag $L$, number of fuzzy rules $R$, \\
  Hyperparameters: $\kappa, \lambda, \alpha, \sigma, \gamma$, kernel size $K$
\ENSURE 
  Final semantic state $\mathbf{s}_{\text{final}} \in \mathbb{R}^{B \times H}$, \\
  Average semantic state $\bar{\mathbf{s}} \in \mathbb{R}^{B \times H}$

\STATE \textbf{Step 1: Token Embedding and Position Encoding}
\STATE $\mathbf{X} = \text{Embedding}(I) \in \mathbb{R}^{B \times T \times H}$ \hfill $\triangleright$ Lookup embedding table
\STATE $\mathbf{X} = \mathbf{X} + \text{PosEnc}(1:T) \in \mathbb{R}^{B \times T \times H}$ \hfill $\triangleright$ Add bidirectional position encoding
\STATE $\mathbf{X} = \text{Dropout}(\mathbf{X})$

\STATE \textbf{Step 2: Bidirectional Depthwise Convolution}
\STATE $\mathbf{X}_{\text{proj}} = \mathbf{W}_{\text{in}} \mathbf{X} \in \mathbb{R}^{B \times T \times H}$ \hfill $\triangleright$ Input projection
\STATE $\mathbf{X}_{\text{conv}}^{\text{fwd}} = \text{GELU}(\text{Conv1D}_{\text{fwd}}(\mathbf{X}_{\text{proj}}))$ \hfill $\triangleright$ Forward convolution
\STATE $\mathbf{X}_{\text{conv}}^{\text{bwd}} = \text{GELU}(\text{Conv1D}_{\text{bwd}}(\text{Flip}(\mathbf{X}_{\text{proj}})))$ \hfill $\triangleright$ Backward convolution
\STATE $\mathbf{X}_{\text{conv}} = \mathbf{X}_{\text{conv}}^{\text{fwd}} + \mathbf{X}_{\text{conv}}^{\text{bwd}} \in \mathbb{R}^{B \times T \times H}$

\STATE \textbf{Step 3: History Window Construction}
\STATE $\mathbf{X}_{\text{padded}} = [\mathbf{0}_{B \times L \times H}, \mathbf{X}] \in \mathbb{R}^{B \times (T+L) \times H}$
\STATE $\mathbf{H}_{\text{hist}} = \text{Unfold}(\mathbf{X}_{\text{padded}}, \text{window}=L, \text{step}=1) \in \mathbb{R}^{B \times T \times L \times H}$

\STATE \textbf{Step 4: Soft Semantic Lag Estimation (Algorithm \ref{alg:soft_lag_estimator_compact})}
\STATE $(\boldsymbol{\alpha},\ \bar{\mathbf{X}}^{\text{hist}},\ \hat{s}^{\text{soft}}) = \text{LagEstimator}(\mathbf{X},\ \mathbf{H}_{\text{hist}})$
\STATE \hfill $\triangleright$ $\boldsymbol{\alpha} \in \mathbb{R}^{B \times T \times L}$, $\bar{\mathbf{X}}^{\text{hist}} \in \mathbb{R}^{B \times T \times H}$, $\hat{s}^{\text{soft}} \in \mathbb{R}^{B \times T}$

\STATE \textbf{Step 5: Compute Input Projection}
\STATE $\mathbf{b}_k = \mathbf{B} \cdot (\mathbf{X} + \mathbf{X}_{\text{conv}})^{\top} \in \mathbb{R}^{B \times T \times H}$ \hfill $\triangleright$ Projected input

\STATE \textbf{Step 6: Compute Delay Correction}
\STATE $\bar{\mathbf{X}}^{\text{proj}} = \mathbf{C} \cdot \bar{\mathbf{X}}^{\text{hist}} \in \mathbb{R}^{B \times T \times H}$ \hfill $\triangleright$ Project historical context
\STATE $\mathbf{d}_k = \gamma \cdot \tanh\!\left(\hat{s}^{\text{soft}} \odot \bar{\mathbf{X}}^{\text{proj}}\right) \in \mathbb{R}^{B \times T \times H}$ \hfill $\triangleright$ Delay correction term

\STATE \textbf{Step 7: Diagonal State Space Recurrence (Zero-Loop)}
\STATE $\mathbf{a} = \text{diag}(\mathbf{A}) \in \mathbb{R}^{H}$ \hfill $\triangleright$ Diagonal state transition
\STATE $t = [0, 1, \ldots, T-1] \in \mathbb{R}^{T}$ \hfill $\triangleright$ Time indices
\STATE $\mathbf{a}^{\text{pow}}[t, h] = (a_h)^{t} \in \mathbb{R}^{T \times H}$ \hfill $\triangleright$ Exponential powers
\STATE $\mathbf{b}_{\text{scaled}} = \mathbf{b}_k \odot (\mathbf{a}^{\text{pow}} + \varepsilon)^{-1} \in \mathbb{R}^{B \times T \times H}$
\STATE $\mathbf{h}_{\text{scaled}} = \text{CumSum}(\mathbf{b}_{\text{scaled}}, \text{dim}=1) \in \mathbb{R}^{B \times T \times H}$
\STATE $\mathbf{h}_k = \mathbf{h}_{\text{scaled}} \odot \mathbf{a}^{\text{pow}} \in \mathbb{R}^{B \times T \times H}$ \hfill $\triangleright$ Latent state from recurrence

\STATE \textbf{Step 8: Fuzzy Semantic Modulation (Algorithm \ref{alg:fuzzy_system_compact})}
\STATE $\mathbf{z} = \dfrac{\mathbf{d}_k}{2} \in \mathbb{R}^{B \times T \times H}$ \hfill $\triangleright$ Semantic context (approximation)
\STATE $(\hat{\mathbf{f}},\ \hat{\mathbf{g}}) = \text{FuzzySystem}(\mathbf{z}) \in \mathbb{R}^{B \times T \times H}$ \hfill $\triangleright$ Modulation vectors

\STATE \textbf{Step 9: Final State Update}
\STATE $\mathbf{s}_k = \hat{\mathbf{f}} \odot \text{SiLU}(\mathbf{h}_k) + \hat{\mathbf{g}} \odot \mathbf{d}_k \in \mathbb{R}^{B \times T \times H}$
\STATE $\mathbf{s}_k = \text{Dropout}(\mathbf{s}_k)$

\STATE \textbf{Step 10: Residual Connection and Output Projection}
\STATE $\mathbf{s}_k = \mathbf{W}_{\text{out}} \mathbf{s}_k \in \mathbb{R}^{B \times T \times H}$ \hfill $\triangleright$ Output projection
\STATE $\mathbf{s}_k = \text{LayerNorm}(\mathbf{s}_k + \mathbf{X})$ \hfill $\triangleright$ Residual connection

\STATE \textbf{Step 11: Extract Output States}
\STATE $\mathbf{s}_{\text{final}} = \mathbf{s}_k[:, -1, :] \in \mathbb{R}^{B \times H}$ \hfill $\triangleright$ Final token state
\STATE $\bar{\mathbf{s}} = \text{Mean}(\mathbf{s}_k, \text{dim}=1) \in \mathbb{R}^{B \times H}$ \hfill $\triangleright$ Average state

\RETURN $\mathbf{s}_{\text{final}},\ \bar{\mathbf{s}}$
\end{algorithmic}
\end{algorithm}

\subsection{Computational Complexity}

The full model operations are defined in Eq.\eqref{eq:final_state}, which are fully vectorized:
\begin{itemize}
    \item $\mathbf{a}^{\odot k}$ and $\mathbf{a}^{\odot -i}$ are computed via broadcasted exponentiation
    \item $\operatorname{cumsum}$ is a single GPU-accelerated primitive
    \item All other operations are element-wise or matrix multiplications
\end{itemize}

The overall time complexity is $\mathcal{O}(1)$ with respect to sequence length when executed on parallel hardware, as the $\operatorname{cumsum}$ operation has $\mathcal{O}(\log T)$ in depth but implemented as a single kernel call.

\subsection{Special Cases}

\begin{align}
\text{If } \mathbf{a} = \mathbf{1}: &\quad \mathbf{h}_k = \operatorname{cumsum}_{i=1}^{k} \mathbf{b}_i \quad \text{(standard cumulative sum)} \\
\text{If } \mathbf{a} \to \mathbf{0}: &\quad \mathbf{h}_k \approx \mathbf{b}_k \quad \text{(no memory)} \\
\text{If } \gamma = 0: &\quad \mathbf{s}_k = f_k \cdot \phi(\mathbf{h}_k) \quad \text{(delay correction disabled)}
\end{align}

\subsection{Effects of different maximum lag and accuracy}

\begin{figure}
    \centering
    \includegraphics[width=0.9\linewidth]{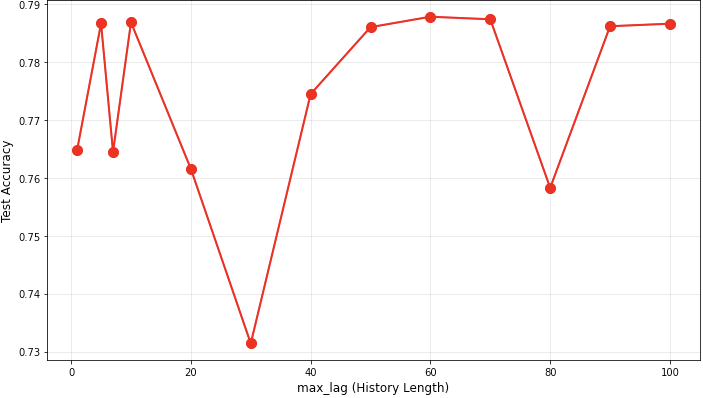}
    \caption{Effects of different lag on IMDB dataset}
    \label{fig:lag}
\end{figure}

\section{Theoretical Properties}
\label{sec:theory}

The nonlinear time-delay system defined in Eq.~\eqref{eq:nonlinear_delay_dde} and its discrete realization
in Eq.~\eqref{eq:discrete_state_update} possess several fundamental theoretical properties that guarantee the well-posed and stable dynamics. We summarize the key results as follows.

\subsection{Existence and Uniqueness of Solutions}

Consider the nonlinear time-delay system
\begin{equation}
\dot{\mathbf{s}}(t)
= \phi\!\big(\mathbf{A}\mathbf{s}(t) + \mathbf{B}\mathbf{x}(t)\big)
+ \gamma\,\psi\!\big(\hat{s}(t)\,\mathbf{C}\,\bar{\mathbf{x}}(t-\tau(t))\big),
\label{eq:delay_system}
\end{equation}
where $\mathbf{s}(t) \in \mathbb{R}^d$ is the state, $\mathbf{x}(t)$ is the input,
$\mathbf{A}, \mathbf{B}, \mathbf{C}$ are constant matrices,
$\phi, \psi$ are nonlinear activation functions,
$\gamma > 0$ is a gain, $\hat{s}(t) \in [0,1]$ is the similarity score,
and $\tau(t)$ is a time-varying delay satisfying $0 \leq \tau(t) \leq \tau_{\max}$.

\begin{theorem}[Existence and Uniqueness]
\label{thm:existence_uniqueness}
Assume that the nonlinear functions $\phi(\cdot)$ and $\psi(\cdot)$ are Lipschitz continuous
with Lipschitz constants $L_\phi$ and $L_\psi$, respectively.
Then, for any bounded initial history function 
$\mathbf{s}(t) = \mathbf{s}_0(t)$ defined on $t \in [-\tau_{\max}, 0]$,
the nonlinear time-delay system \eqref{eq:delay_system}
admits a unique continuous solution $\mathbf{s}(t)$ on any finite interval $t \in [0,T]$.
\end{theorem}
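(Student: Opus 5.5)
Our plan is to observe that in \eqref{eq:delay_system} the delayed term does not involve the unknown state at all: the right-hand side depends on $\mathbf{s}$ only through $\phi(\mathbf{A}\mathbf{s}(t)+\mathbf{B}\mathbf{x}(t))$, while $\gamma\,\psi\big(\hat{s}(t)\,\mathbf{C}\,\bar{\mathbf{x}}(t-\tau(t))\big)$ is a function of time determined by the input and the similarity score. We therefore write the system as
\begin{equation*}
\dot{\mathbf{s}}(t) = \Phi(t,\mathbf{s}(t)) + \mathbf{u}(t), \qquad \Phi(t,\mathbf{s}) := \phi\big(\mathbf{A}\mathbf{s}+\mathbf{B}\mathbf{x}(t)\big), \quad \mathbf{u}(t) := \gamma\,\psi\big(\hat{s}(t)\,\mathbf{C}\,\bar{\mathbf{x}}(t-\tau(t))\big),
\end{equation*}
with initial value $\mathbf{s}(0)=\mathbf{s}_0(0)$. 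The initial history on $[-\tau_{\max},0]$ only fixes $\mathbf{s}$ there. If one insists on a genuine state delay, i.e.\ the more general form in which $G$ depends on $\mathbf{s}(t-\tau(t))$, we would instead use the method of steps or a fixed point on $C([-\tau_{\max},T])$, as described in the last paragraph.

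We need a standing regularity assumption that the statement leaves implicit: $\mathbf{x}$, $\bar{\mathbf{x}}$, $\hat{s}$ and $\tau$ are measurable and bounded (for instance piecewise continuous) on $[-\tau_{\max},T]$. Under this assumption the following hold.
\begin{itemize}
\item $\mathbf{u}$ is measurable and bounded, because $\psi$ is Lipschitz and hence grows at most linearly, with $|\psi(v)|\le |\psi(0)|+L_\psi|v|$, and $\hat{s}\in[0,1]$.
\item $\Phi$ is globally Lipschitz in $\mathbf{s}$ uniformly in $t$: $\|\Phi(t,\mathbf{s})-\Phi(t,\mathbf{s}')\|\le L_\phi\|\mathbf{A}\|\,\|\mathbf{s}-\mathbf{s}'\|$.
\item $\Phi(t,\mathbf{0})$ is bounded, using the linear growth of $\phi$ and the boundedness of $\mathbf{x}$.
\end{itemize}

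We then recast the problem as the integral equation $\mathbf{s}(t)=\mathbf{s}_0(0)+\int_0^t[\Phi(r,\mathbf{s}(r))+\mathbf{u}(r)]\,dr$ on $C([0,T];\mathbb{R}^d)$. We equip this space with the Bielecki norm $\|\mathbf{s}\|_\beta=\sup_t e^{-\beta t}\|\mathbf{s}(t)\|$ and take $\beta>L_\phi\|\mathbf{A}\|$. The standard estimate gives $\|\mathcal{T}\mathbf{s}-\mathcal{T}\mathbf{s}'\|_\beta\le (L_\phi\|\mathbf{A}\|/\beta)\|\mathbf{s}-\mathbf{s}'\|_\beta$, so the integral operator $\mathcal{T}$ is a contraction. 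Banach's fixed point theorem then yields a unique continuous (indeed absolutely continuous) solution on all of $[0,T]$ at once. Because the Lipschitz bound is global, no finite-time blow-up can occur and no continuation argument is needed. As an alternative route, we could apply the Carathéodory existence and uniqueness theorem and use a Gronwall bound to extend the solution to $[0,T]$.

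The main obstacle is the delay structure, if one reads the theorem as covering a true state delay $\mathbf{s}(t-\tau(t))$ with a time-varying $\tau$. In that case we would apply the same contraction on $\{\mathbf{s}\in C([-\tau_{\max},T]) : \mathbf{s}=\mathbf{s}_0 \text{ on } [-\tau_{\max},0]\}$. The delayed term adds a Lipschitz contribution $L_\psi\gamma\|\mathbf{C}\|\,\|\mathbf{s}(r-\tau(r))-\mathbf{s}'(r-\tau(r))\|$. This is bounded in the Bielecki norm using $e^{-\beta r}\le e^{-\beta(r-\tau(r))}$, which holds because $\tau\ge 0$. The key point is that $\tau\ge0$ ensures the right-hand side only queries past values, so the contraction constant stays below $1$ for $\beta$ large enough. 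The one delicate technical point is that $\mathbf{s}_0$ must be continuous (not merely bounded) for the solution to be continuous on $[-\tau_{\max},T]$. If $\mathbf{s}_0$ is only bounded, we would claim continuity of $\mathbf{s}$ on $[0,T]$ only.
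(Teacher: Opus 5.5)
Your proof is correct, but it takes a different route from the paper's. The paper uses the plain sup norm on a closed ball $\mathcal{B}\subset C([-\tau_{\max},T],\mathbb{R}^d)$ and proves $\|\mathcal{F}\mathbf{s}_1-\mathcal{F}\mathbf{s}_2\|_\infty\le KT\,\|\mathbf{s}_1-\mathbf{s}_2\|_\infty$. It then shrinks $T$ until $KT<1$, restarts from $t=T$, and patches finitely many pieces together. You instead use the Bielecki norm with $\beta>L_\phi\|\mathbf{A}\|$, which gives a contraction on all of $C([0,T];\mathbb{R}^d)$ in one step.

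The two arguments also read the delayed term differently:
\begin{itemize}
\item The paper treats $\hat{s}(u)$ as a function of the current state $\mathbf{s}(u)$. The delayed term therefore contributes $\gamma L_\psi\|\mathbf{C}\|\,\|\bar{\mathbf{x}}\|_\infty L_{\hat{s}}$ to its constant $K$.
\item You treat $\hat{s}$ and $\bar{\mathbf{x}}$ as given functions of time, consistent with how they are built from the input embeddings. This reduces the problem to a non-autonomous ODE with bounded forcing.
\end{itemize}

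What your version buys:
\begin{itemize}
\item Working on the whole complete space removes the continuation step. It also removes the need to show that the operator maps $\mathcal{B}$ into itself, which the paper never checks.
\item You make explicit two assumptions the paper glosses over. One is measurability and boundedness of $\mathbf{x},\bar{\mathbf{x}},\hat{s},\tau$; the paper only asserts boundedness. The other is continuity of $\mathbf{s}_0$, without which the paper's set $\mathcal{B}$ is empty.
\item Your state-delay extension, using $e^{\beta(r-\tau(r))}\le e^{\beta r}$ for $\tau\ge 0$, goes beyond the paper. The paper's integral operator never actually evaluates $\mathbf{s}$ at a delayed time.
\end{itemize}

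What the paper's reading covers that you do not address explicitly is a state-dependent similarity score. Your Bielecki argument absorbs this case by adding the same term $\gamma L_\psi\|\mathbf{C}\|\,\|\bar{\mathbf{x}}\|_\infty L_{\hat{s}}$ to the Lipschitz constant. However, this works only if you \emph{assume} that $\mathbf{s}\mapsto\hat{s}$ is Lipschitz. The paper infers a Lipschitz constant from mere continuity of $\hat{s}$, which does not suffice.
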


\begin{proof}
The proof reformulates the delay differential equation as an integral equation and applies the Banach fixed-point theorem.

Let $\mathcal{C} = C([-\tau_{\max}, T], \mathbb{R}^d)$ be the Banach space of continuous functions equipped with the supremum norm $\|\mathbf{s}\|_{\infty} = \sup_{t} \|\mathbf{s}(t)\|$. Define the closed subset
\[
\mathcal{B} = \left\{ \mathbf{s} \in \mathcal{C} : \mathbf{s}(t) = \mathbf{s}_0(t) \text{ for } t \in [-\tau_{\max}, 0], \; \|\mathbf{s} - \mathbf{s}_0\|_{\infty} \leq R \right\}
\]
for some $R > 0$ to be determined.

For $t \in [0,T]$, any solution to \eqref{eq:delay_system} must satisfy the integral equation
\[
\mathbf{s}(t) = \mathbf{s}(0) + \int_0^t \Big[ \phi\!\big(\mathbf{A}\mathbf{s}(u) + \mathbf{B}\mathbf{x}(u)\big) + \gamma\,\psi\!\big(\hat{s}(u)\,\mathbf{C}\,\bar{\mathbf{x}}(u-\tau(u))\big) \Big] du.
\]

Define the operator $\mathcal{F}: \mathcal{B} \to \mathcal{C}$ by
\[
(\mathcal{F}\mathbf{s})(t) = 
\begin{cases}
\mathbf{s}_0(t), & t \in [-\tau_{\max}, 0], \\[4pt]
\displaystyle \mathbf{s}(0) + \int_0^t \Big[ \phi\!\big(\mathbf{A}\mathbf{s}(u) + \mathbf{B}\mathbf{x}(u)\big) + \gamma\,\psi\!\big(\hat{s}(u)\,\mathbf{C}\,\bar{\mathbf{x}}(u-\tau(u))\big) \Big] du, & t \in [0,T].
\end{cases}
\]

For any $\mathbf{s}_1, \mathbf{s}_2 \in \mathcal{B}$ and $t \in [0,T]$, the Lipschitz continuity of $\phi$ and $\psi$ yields
\[
\begin{aligned}
\|(\mathcal{F}\mathbf{s}_1)(t) - (\mathcal{F}\mathbf{s}_2)(t)\|
&\leq \int_0^t L_\phi \|\mathbf{A}\| \|\mathbf{s}_1(u) - \mathbf{s}_2(u)\| du \\
&\quad + \gamma \int_0^t L_\psi \|\mathbf{C}\| \|\bar{\mathbf{x}}\|_{\infty} |\hat{s}_1(u) - \hat{s}_2(u)| du,
\end{aligned}
\]
where $\|\bar{\mathbf{x}}\|_{\infty} = \sup_t \|\bar{\mathbf{x}}(t)\| < \infty$.

Since $\hat{s}(u)$ is a continuous function of $\mathbf{s}(u)$, there exists a constant $L_{\hat{s}} > 0$ such that $|\hat{s}_1(u) - \hat{s}_2(u)| \leq L_{\hat{s}} \|\mathbf{s}_1(u) - \mathbf{s}_2(u)\|$. Thus,
\[
\|(\mathcal{F}\mathbf{s}_1)(t) - (\mathcal{F}\mathbf{s}_2)(t)\| \leq K \int_0^t \|\mathbf{s}_1(u) - \mathbf{s}_2(u)\| du,
\]
where $K = L_\phi \|\mathbf{A}\| + \gamma L_\psi \|\mathbf{C}\| \|\bar{\mathbf{x}}\|_{\infty} L_{\hat{s}}$.

Taking the supremum over $t \in [0,T]$ gives
\[
\|\mathcal{F}\mathbf{s}_1 - \mathcal{F}\mathbf{s}_2\|_{\infty} \leq K T \|\mathbf{s}_1 - \mathbf{s}_2\|_{\infty}.
\]

Choose $T$ sufficiently small so that $K T < 1$. Then $\mathcal{F}$ is a contraction on $\mathcal{B}$. By the Banach fixed-point theorem, there exists a unique $\mathbf{s}^* \in \mathcal{B}$ such that $\mathcal{F}\mathbf{s}^* = \mathbf{s}^*$, which is the unique solution on $[0,T]$.

To extend to the entire interval $[0, T_{\text{final}}]$, repeat the argument. Let $T_1 = T$ and define $\mathbf{s}(t)$ on $[0, T_1]$. Then take the solution at $t = T_1$ as the new initial history and extend to $[T_1, 2T]$. Finitely many such steps yield a unique solution on $[0, T_{\text{final}}]$.

The Lipschitz continuity of $\phi$ and $\psi$ holds for common activation functions like $\tanh$ and $\text{SiLU}$. The input $\mathbf{x}(t)$ is bounded by construction (e.g., word embeddings), and the initial history $\mathbf{s}_0(t)$ is given and bounded. Hence, all conditions are satisfied.
\end{proof}

\subsection{Boundedness of the State Trajectory}


\begin{theorem}[Boundedness of Solutions]
\label{thm:boundedness}
Assume the following conditions hold:
\begin{enumerate}
    \item The input $\mathbf{x}(t)$ is uniformly bounded: there exists $M_x > 0$ such that $\|\mathbf{x}(t)\| \leq M_x$ for all $t \geq 0$.
    \item The activation functions $\phi$ and $\psi$ are Lipschitz continuous with constants $L_\phi, L_\psi > 0$ and satisfy $\phi(\mathbf{0}) = \mathbf{0}$, $\psi(\mathbf{0}) = \mathbf{0}$.
    \item The similarity score satisfies $0 \leq \hat{s}(t) \leq 1$ for all $t$.
\end{enumerate}
Then the state trajectory $\mathbf{s}(t)$ satisfies the following bound for all $t \geq 0$:
\begin{equation}
\|\mathbf{s}(t)\| \leq 
\|\mathbf{s}(0)\| e^{L_\phi \|\mathbf{A}\| t}
+ \frac{L_\phi \|\mathbf{B}\| M_x + \gamma L_\psi \|\mathbf{C}\| M_x}{L_\phi \|\mathbf{A}\|}
\left( e^{L_\phi \|\mathbf{A}\| t} - 1 \right),
\label{eq:boundedness}
\end{equation}
provided that $\mathbf{A} \neq \mathbf{0}$. If $\mathbf{A} = \mathbf{0}$, the bound simplifies to
\begin{equation}
\|\mathbf{s}(t)\| \leq \|\mathbf{s}(0)\| + \left( L_\phi \|\mathbf{B}\| M_x + \gamma L_\psi \|\mathbf{C}\| M_x \right) t.
\label{eq:boundedness_zero}
\end{equation}
\end{theorem}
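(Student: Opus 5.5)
The plan is a Grönwall-type comparison argument on $u(t) = \|\mathbf{s}(t)\|$, applied to the integral form of \eqref{eq:delay_system}. By Theorem~\ref{thm:existence_uniqueness}, on every finite interval $[0,T]$ there is a unique continuous solution. I will establish the bound on an arbitrary $[0,T]$, which gives it for all $t \geq 0$. As in the proof of Theorem~\ref{thm:existence_uniqueness}, the solution satisfies
\[
\mathbf{s}(t) = \mathbf{s}(0) + \int_0^t \Big[ \phi\big(\mathbf{A}\mathbf{s}(u) + \mathbf{B}\mathbf{x}(u)\big) + \gamma\,\psi\big(\hat{s}(u)\,\mathbf{C}\,\bar{\mathbf{x}}(u-\tau(u))\big) \Big] du .
\]

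\textbf{Step 1: pointwise bounds on the right-hand side.} Using $\phi(\mathbf{0})=\mathbf{0}$ and Lipschitz continuity,
\[
\|\phi(\mathbf{A}\mathbf{s} + \mathbf{B}\mathbf{x})\| \leq L_\phi\|\mathbf{A}\|\,\|\mathbf{s}\| + L_\phi\|\mathbf{B}\|M_x .
\]
For the delayed term I first need $\|\bar{\mathbf{x}}(t)\| \leq M_x$. This holds because $\bar{\mathbf{x}}^{\mathrm{hist}}$ is a convex combination of past inputs, with softmax weights $\alpha_{t,k}\ge 0$ summing to $1$. Zero padding of the history only decreases norms, and in the hard-selection case $\bar{\mathbf{x}}$ is itself a past input. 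Then $\psi(\mathbf{0})=\mathbf{0}$ and $0\le \hat{s}\le 1$ give
\[
\gamma\|\psi(\hat{s}\,\mathbf{C}\bar{\mathbf{x}})\| \leq \gamma L_\psi\|\mathbf{C}\|M_x .
\]
Setting $a = L_\phi\|\mathbf{A}\|$ and $b = (L_\phi\|\mathbf{B}\| + \gamma L_\psi\|\mathbf{C}\|)M_x$, I will have $\|\dot{\mathbf{s}}\| \le a\|\mathbf{s}\| + b$ in integrated form. Taking norms in the integral equation yields
\[
u(t) \le u(0) + bt + a\int_0^t u(r)\,dr .
\]

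\textbf{Step 2: Grönwall.} I apply the integral Grönwall inequality with the nondecreasing forcing term $u(0)+bt$. The cleanest route is to compare with the solution $v(t) = u(0)e^{at} + \frac{b}{a}(e^{at}-1)$ of $v' = av + b$, $v(0)=u(0)$. Let $w(t) = \int_0^t u$. Then $w' \le u(0) + bt + aw$, so multiplying by $e^{-at}$ and integrating bounds $w$. Substituting back gives $u(t) \le v(t)$, which is exactly \eqref{eq:boundedness}. For $\mathbf{A}=\mathbf{0}$ we have $a=0$, and the inequality reads directly $u(t)\le u(0)+bt$, which is \eqref{eq:boundedness_zero}. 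Equivalently, this is the $a\to 0$ limit of \eqref{eq:boundedness}, since $(e^{at}-1)/a \to t$.

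\textbf{Main obstacle.} The one subtle point is to avoid differentiating $\|\mathbf{s}(t)\|$, which may fail to be differentiable where $\mathbf{s}=\mathbf{0}$. I will therefore work with the integral inequality throughout, or alternatively use upper Dini derivatives. A second point to check is the uniform bound $\|\bar{\mathbf{x}}\|\le M_x$. This needs the convex-combination structure of the lag estimator together with the convention that the history before time $0$ is either bounded by $M_x$ or zero-padded. If instead one only assumed the pre-history input bounded by some other constant, $M_x$ in the delay term would be replaced by the maximum of the two bounds.
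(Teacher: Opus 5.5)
Your proposal is correct and follows essentially the same route as the paper: the same Lipschitz and $\phi(\mathbf{0})=\mathbf{0}$, $\psi(\mathbf{0})=\mathbf{0}$ estimates give $\|\dot{\mathbf{s}}\|\le a\|\mathbf{s}\|+b$, followed by a Gr\"onwall comparison with $v'=av+b$. Two of your steps are small rigor improvements rather than a different approach: your integral-form Gr\"onwall avoids differentiating $\|\mathbf{s}(t)\|$, which the paper does implicitly when it writes $\frac{d}{dt}\big(\|\mathbf{s}(t)\|e^{-at}\big)\le be^{-at}$, and your convex-combination and pre-history argument justifies the bound $M_x$ on the delayed input, which the paper takes for granted.
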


\begin{proof}
Take the Euclidean norm on both sides of \eqref{eq:nonlinear_delay_dde}. By the triangle inequality,
\[
\|\dot{\mathbf{s}}(t)\| \leq 
\Big\| \phi\!\big(\mathbf{A}\mathbf{s}(t) + \mathbf{B}\mathbf{x}(t)\big) \Big\|
+ \gamma \Big\| \psi\!\big(\hat{s}(t)\,\mathbf{C}\,\mathbf{x}(t-\tau(t))\big) \Big\|.
\]

Since $\phi$ is Lipschitz continuous with $\phi(\mathbf{0}) = \mathbf{0}$, we have
\[
\Big\| \phi\!\big(\mathbf{A}\mathbf{s}(t) + \mathbf{B}\mathbf{x}(t)\big) \Big\|
\leq L_\phi \|\mathbf{A}\mathbf{s}(t) + \mathbf{B}\mathbf{x}(t)\|
\leq L_\phi \|\mathbf{A}\| \|\mathbf{s}(t)\| + L_\phi \|\mathbf{B}\| \|\mathbf{x}(t)\|.
\]

Similarly, using the Lipschitz continuity of $\psi$, the boundedness of $\hat{s}(t) \in [0,1]$, and $\psi(\mathbf{0}) = \mathbf{0}$,
\[
\Big\| \psi\!\big(\hat{s}(t)\,\mathbf{C}\,\mathbf{x}(t-\tau(t))\big) \Big\|
\leq L_\psi \|\mathbf{C}\| \|\mathbf{x}(t-\tau(t))\|
\leq L_\psi \|\mathbf{C}\| M_x.
\]

Substituting these bounds and using $\|\mathbf{x}(t)\| \leq M_x$, we obtain the differential inequality
\[
\|\dot{\mathbf{s}}(t)\| \leq a \|\mathbf{s}(t)\| + b,
\]
where $a = L_\phi \|\mathbf{A}\|$ and $b = L_\phi \|\mathbf{B}\| M_x + \gamma L_\psi \|\mathbf{C}\| M_x$.

This inequality can be rewritten as
\[
\frac{d}{dt} \left( \|\mathbf{s}(t)\| e^{-at} \right) \leq b e^{-at}.
\]

Integrating from $0$ to $t$ gives
\[
\|\mathbf{s}(t)\| e^{-at} - \|\mathbf{s}(0)\| \leq b \int_0^t e^{-a\tau} d\tau = \frac{b}{a} \left(1 - e^{-at}\right).
\]

Multiplying by $e^{at}$ yields
\[
\|\mathbf{s}(t)\| \leq \|\mathbf{s}(0)\| e^{at} + \frac{b}{a} \left(e^{at} - 1\right).
\]

Substituting back $a = L_\phi \|\mathbf{A}\|$ and $b = L_\phi \|\mathbf{B}\| M_x + \gamma L_\psi \|\mathbf{C}\| M_x$ proves \eqref{eq:boundedness}. When $\mathbf{A} = \mathbf{0}$, the differential inequality reduces to $\|\dot{\mathbf{s}}(t)\| \leq b$, and direct integration gives \eqref{eq:boundedness_zero}.
\end{proof}

This result guarantees that under mild assumptions, the system is free from divergence or blow-up,
and its trajectories evolve within a bounded region of the latent state space.

\subsection{Stability of the Equilibrium}

We next examine the stability of the equilibrium point.
Let $\mathbf{s}^*$ denote a fixed point satisfying
\begin{equation}
\mathbf{0}
= \phi\!\big(\mathbf{A}\mathbf{s}^* + \mathbf{B}\mathbf{x}^*\big)
+ \gamma\,\psi\!\big(\hat{s}^*\,\mathbf{C}\,\bar{\mathbf{x}}^*\big),
\end{equation}
for some constant input $\mathbf{x}^*$ and delay $\tau^*$.

\begin{theorem}[Local Exponential Stability]
If the spectral radius $\rho(\mathbf{A})<1$
and the nonlinearities $\phi$ and $\psi$ satisfy
$\|\nabla \phi\|\leq L_\phi$ and $\|\nabla \psi\|\leq L_\psi$,
then the equilibrium $\mathbf{s}^*$ is locally exponentially stable;
that is, there exist constants $K>0$ and $\eta>0$ such that
\begin{equation}\label{eq:exponential_stability}
\|\mathbf{s}(t)-\mathbf{s}^*\|
\leq K e^{-\eta t}\|\mathbf{s}(0)-\mathbf{s}^*\|.
\end{equation}
\end{theorem}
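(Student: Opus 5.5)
The plan is to prove the statement by linearizing around $\mathbf{s}^*$ and using a discrete contraction argument. I will argue in the discrete realization Eq.~\eqref{eq:discrete_state_update}, with $t$ read as the step index $k$, for two reasons. First, the hypothesis $\rho(\mathbf{A})<1$ is a Schur-type (discrete-time) condition; in continuous time it would not by itself give stability, since we would need $\nabla\phi\,\mathbf{A}$ to be Hurwitz. Second, the fixed-point equation can then be read as $\mathbf{s}^*=\phi(\mathbf{A}\mathbf{s}^*+\mathbf{B}\mathbf{x}^*)+\gamma\psi(\hat{s}^*\mathbf{C}\bar{\mathbf{x}}^*)$.

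\textbf{Step 1: error recursion.} I hold the input, delay, similarity score and historical context at their equilibrium values $\mathbf{x}^*,\tau^*,\hat s^*,\bar{\mathbf{x}}^*$. In the model these depend only on the input embeddings, not on $\mathbf{s}$. Consequently the delay correction $\gamma\psi(\hat s^*\mathbf{C}\bar{\mathbf{x}}^*)$ is a constant that cancels in the error $\mathbf{e}_k=\mathbf{s}_k-\mathbf{s}^*$. By the mean value theorem applied componentwise, this gives
\[
\mathbf{e}_k=\mathbf{D}_k\mathbf{A}\,\mathbf{e}_{k-1},
\]
where $\mathbf{D}_k$ is a diagonal matrix of derivatives of $\phi$ at intermediate points. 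Because $\|\nabla\phi\|\le L_\phi$, each entry of $\mathbf{D}_k$ has absolute value at most $L_\phi$.

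\textbf{Step 2: contraction.} I use the structural restriction of the Time Delay System section that $\mathbf{A}=\operatorname{diag}(\mathbf{a})$ is diagonal. Then $\mathbf{D}_k\mathbf{A}$ is diagonal with entries bounded by $L_\phi\max_j|a_j| = L_\phi\,\rho(\mathbf{A})$. It follows that
\[
\|\mathbf{e}_k\|\le (L_\phi\rho(\mathbf{A}))^k\|\mathbf{e}_0\|.
\]
Setting $K=1$ and $\eta=-\ln(L_\phi\rho(\mathbf{A}))>0$ yields \eqref{eq:exponential_stability}, provided $L_\phi\rho(\mathbf{A})<1$.

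For a general (non-diagonal) $\mathbf{A}$, I would instead pick an induced norm $\|\cdot\|_*$ with $\|\mathbf{A}\|_*\le\rho(\mathbf{A})+\epsilon$. Converting back to the Euclidean norm then introduces the constant $K$ as a condition number. The catch is that $\mathbf{D}_k$ need not be contractive in $\|\cdot\|_*$. In that case I would fall back on the linearization at $\mathbf{s}^*$, $\mathbf{J}=\nabla\phi(\mathbf{A}\mathbf{s}^*+\mathbf{B}\mathbf{x}^*)\mathbf{A}$, require $\rho(\mathbf{J})<1$, and handle the nonlinear remainder by continuity of $\nabla\phi$ on a small ball. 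That is what makes the result local.

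\textbf{Main obstacle.} The obstacle is closing the gap between the stated hypothesis $\rho(\mathbf{A})<1$ and the condition actually needed, $L_\phi\rho(\mathbf{A})<1$, or more generally $\rho(\mathbf{J})<1$. For $\tanh$ we have $L_\phi=1$, so the hypothesis suffices as stated. For SiLU, $L_\phi\approx1.1$, so I expect to have to add the assumption $L_\phi\rho(\mathbf{A})<1$ explicitly. Alternatively I would note that the local slope $\phi'$ at the equilibrium point is what matters and assume it is at most $1$. If a genuinely continuous-time version is wanted, the same linearization gives $\dot{\mathbf{e}}=\mathbf{J}\mathbf{e}+o(\|\mathbf{e}\|)$, and the hypothesis has to be replaced by requiring $\mathbf{J}$ to be Hurwitz. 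The bound then follows from a Lyapunov function $\mathbf{e}^\top\mathbf{P}\mathbf{e}$.
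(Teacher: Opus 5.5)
Your proposal is sound, and its most valuable part is the diagnosis in your last paragraph. As stated, the theorem is false, so the hypotheses you add are necessary and not a weakness of your route.

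In the paper's own continuous-time setting, take $d=1$, $\phi=\psi=\tanh$, $\mathbf{C}=\mathbf{0}$, $\mathbf{A}=\tfrac12$ and $\mathbf{B}\mathbf{x}^*=0$. Then $\dot s=\tanh(s/2)$, and $s^*=0$ is an equilibrium with $\rho(\mathbf{A})=\tfrac12<1$. Yet $s^*$ is unstable. This holds regardless of how $\hat s$ is modeled, since the delay term vanishes.

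One refinement: $L_\phi=1$ rescues the stated hypothesis only in your diagonal case. For non-diagonal $\mathbf{A}$, $\rho(\mathbf{D}\mathbf{A})$ can exceed $1$ even when $|D_{jj}|\le 1$. For example, $\mathbf{A}=\bigl(\begin{smallmatrix}2&3\\-1&-3/2\end{smallmatrix}\bigr)$ has $\rho(\mathbf{A})=\tfrac12$, but $\mathbf{D}=\operatorname{diag}(1,\epsilon)$ gives $\rho(\mathbf{D}\mathbf{A})=2-\tfrac32\epsilon$. Such a $\mathbf{D}$ is exactly $\tanh'$ at a discrete fixed point where one coordinate of $\mathbf{A}\mathbf{s}^*+\mathbf{B}\mathbf{x}^*$ is $0$ and the other is saturated. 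So your condition $\rho(\mathbf{J})<1$ is genuinely needed in the general case, even for $\tanh$.

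The paper argues differently. It stays in continuous time and treats $\hat s$ as a function of $\mathbf{s}$. It linearizes to $\dot{\mathbf{e}}=\mathbf{A}_\phi\mathbf{e}(t)+\gamma\mathbf{C}_\psi\mathbf{e}(t-\tau^*)$, bounds the characteristic roots, and invokes a Lyapunov--Krasovskii functional with an LMI. It crosses the gap you identified only through invalid steps, and they are precisely the Schur/Hurwitz conflation you flagged:
\begin{itemize}
\item The small-gain condition $\|\mathbf{A}_\phi\|+\gamma\|\mathbf{C}_\psi\|<1$ only yields $|\lambda|<1$ for roots with $\Re\lambda\ge 0$. That does not exclude a root such as $\lambda=\tfrac12$, so it does not give $\Re\lambda<-\eta$.
\item $\rho(\mathbf{A}_\phi)<1$ cannot guarantee the LMI is solvable. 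The LMI implies $\mathbf{A}_\phi^\top\mathbf{P}+\mathbf{P}\mathbf{A}_\phi\prec 0$ with $\mathbf{P}\succ 0$, i.e.\ $\mathbf{A}_\phi$ Hurwitz.
\item The proof also imports a smallness condition on $\gamma$ that is not among the hypotheses.
\end{itemize}

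Your route is elementary and actually rigorous for the amended claim: discrete time, diagonal $\mathbf{A}$, and $L_\phi\rho(\mathbf{A})<1$. The componentwise mean value theorem gives the exact recursion $\mathbf{e}_k=\mathbf{D}_k\mathbf{A}\mathbf{e}_{k-1}$. You then get a global bound with $K=1$ and no condition on $\gamma$, because in the model as defined $\hat s$ and $\bar{\mathbf{x}}^{\mathrm{hist}}$ depend only on the inputs.

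What your route does not cover is the state-dependent delayed feedback that the paper's linearization assumes. If $\hat s$ depended on $\mathbf{s}$, your contraction factor would acquire a term like $\gamma L_\psi\|\mathbf{C}\|\|\bar{\mathbf{x}}^*\|L_{\hat s}$, as in the paper's separate discrete-stability theorem. A bound on $\gamma$ would then become necessary.
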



\begin{proof}
Define the error vector $\mathbf{e}(t) = \mathbf{s}(t) - \mathbf{s}^*$. Assuming $\mathbf{x}(t) = \mathbf{x}^*$ is constant in a neighborhood of the equilibrium, we have $\mathbf{x}(t) = \mathbf{x}^*$ and $\mathbf{x}(t-\tau(t)) = \mathbf{x}^*$. Then $\hat{s}(t) = \hat{s}^* + \Delta \hat{s}(t)$, where $\Delta \hat{s}(t) = \frac{\partial \hat{s}}{\partial \mathbf{s}} \mathbf{e}(t) + \mathcal{O}(\|\mathbf{e}\|^2)$.

Expanding $\phi$ and $\psi$ around the equilibrium using Taylor series:
\begin{align}
\phi\!\big(\mathbf{A}\mathbf{s}(t) + \mathbf{B}\mathbf{x}^*\big) &= 
\phi\!\big(\mathbf{A}\mathbf{s}^* + \mathbf{B}\mathbf{x}^*\big) + 
\nabla \phi(\mathbf{A}\mathbf{s}^* + \mathbf{B}\mathbf{x}^*) \cdot \mathbf{A} \mathbf{e}(t) + \mathcal{O}(\|\mathbf{e}\|^2), \\
\psi\!\big(\hat{s}(t)\,\mathbf{C}\,\mathbf{x}^*\big) &= 
\psi\!\big(\hat{s}^*\,\mathbf{C}\,\mathbf{x}^*\big) + 
\nabla \psi(\hat{s}^*\,\mathbf{C}\,\mathbf{x}^*) \cdot \mathbf{C}\,\mathbf{x}^* \cdot \frac{\partial \hat{s}}{\partial \mathbf{s}} \mathbf{e}(t) + \mathcal{O}(\|\mathbf{e}\|^2).
\end{align}
\begin{equation}
\mathbf{s}^* = \phi\!\big(\mathbf{A}\mathbf{s}^* + \mathbf{B}\mathbf{x}^*\big) + \gamma\,\psi\!\big(\hat{s}^*\,\mathbf{C}\,\bar{\mathbf{x}}^*\big).
\label{eq:equilibrium}
\end{equation}
Substituting these expansions into \eqref{eq:nonlinear_delay_dde} and using the equilibrium condition \eqref{eq:equilibrium}, we obtain the linearized error dynamics:
\begin{equation}
\dot{\mathbf{e}}(t) = \mathbf{A}_\phi \mathbf{e}(t) + \gamma \mathbf{C}_\psi \mathbf{e}(t - \tau^*),
\label{eq:linearized}
\end{equation}
where
\[
\mathbf{A}_\phi = \nabla \phi(\mathbf{A}\mathbf{s}^* + \mathbf{B}\mathbf{x}^*) \cdot \mathbf{A}, \qquad
\mathbf{C}_\psi = \nabla \psi(\hat{s}^*\,\mathbf{C}\,\mathbf{x}^*) \cdot \mathbf{C}\,\mathbf{x}^* \cdot \frac{\partial \hat{s}}{\partial \mathbf{s}}.
\]

From the Lipschitz assumptions, we have the bounds
\[
\|\mathbf{A}_\phi\| \leq L_\phi \|\mathbf{A}\|, \qquad 
\|\mathbf{C}_\psi\| \leq L_\psi \|\mathbf{C}\| \|\mathbf{x}^*\| L_{\hat{s}}.
\]

The characteristic equation of the linear delay system \eqref{eq:linearized} is
\begin{equation}
\det\!\big(\lambda \mathbf{I} - \mathbf{A}_\phi - \gamma e^{-\lambda \tau^*} \mathbf{C}_\psi\big) = 0.
\label{eq:characteristic}
\end{equation}

For any root $\lambda$ of \eqref{eq:characteristic}, taking norms gives
\[
|\lambda| \leq \|\mathbf{A}_\phi\| + \gamma \|\mathbf{C}_\psi\| \leq L_\phi \|\mathbf{A}\| + \gamma L_\psi \|\mathbf{C}\| \|\mathbf{x}^*\| L_{\hat{s}}.
\]

Since $\rho(\mathbf{A}) < 1$, there exists a matrix norm $\|\cdot\|_*$ such that $\|\mathbf{A}\|_* < 1$. By continuity, for sufficiently small $\gamma$ or sufficiently small $\|\mathbf{C}_\psi\|$, we also have $\rho(\mathbf{A}_\phi) < 1$. The small-gain condition
\begin{equation}
\|\mathbf{A}_\phi\| + \gamma \|\mathbf{C}_\psi\| < 1
\label{eq:small_gain}
\end{equation}
ensures that all characteristic roots satisfy $\Re(\lambda) < -\eta$ for some $\eta > 0$.

To establish exponential stability rigorously, construct the Lyapunov-Krasovskii functional
\begin{equation}
V(\mathbf{e}_t) = \mathbf{e}(t)^\top \mathbf{P} \mathbf{e}(t) + \int_{t-\tau_{\max}}^{t} \int_{\theta}^{t} \|\mathbf{e}(s)\|^2 ds d\theta,
\label{eq:lyapunov}
\end{equation}
where $\mathbf{P} = \mathbf{P}^\top > 0$ is a positive definite matrix. Taking the time derivative along trajectories of \eqref{eq:linearized} and applying Young's inequality yields
\[
\dot{V}(\mathbf{e}_t) \leq \mathbf{e}(t)^\top \big( \mathbf{A}_\phi^\top \mathbf{P} + \mathbf{P} \mathbf{A}_\phi + \tau_{\max} \mathbf{I} + \gamma^2 \mathbf{P} \mathbf{C}_\psi \mathbf{C}_\psi^\top \mathbf{P} \big) \mathbf{e}(t) + \gamma^2 \|\mathbf{e}(t-\tau^*)\|^2 - \int_{t-\tau_{\max}}^{t} \|\mathbf{e}(s)\|^2 ds.
\]

If there exists $\mathbf{P} \succ 0$ satisfying the Linear Matrix Inequality
\begin{equation}
\mathbf{A}_\phi^\top \mathbf{P} + \mathbf{P} \mathbf{A}_\phi + \tau_{\max} \mathbf{I} + \gamma^2 \mathbf{P} \mathbf{C}_\psi \mathbf{C}_\psi^\top \mathbf{P} + \alpha \mathbf{I} \prec 0
\label{eq:lmi}
\end{equation}
for some $\alpha > 0$, then
\[
\dot{V}(\mathbf{e}_t) \leq -\alpha \|\mathbf{e}(t)\|^2.
\]

The condition $\rho(\mathbf{A}_\phi) < 1$ guarantees the existence of such a $\mathbf{P}$ by the Lyapunov-Krasovskii theorem for delay systems. Standard results then imply the existence of constants $K \geq 1$ and $\eta > 0$ such that
\[
\|\mathbf{e}(t)\| \leq K e^{-\eta t} \|\mathbf{e}(0)\|.
\]

Therefore, the exponential stability of the linearized system implies the local exponential stability of the original nonlinear system \eqref{eq:nonlinear_delay_dde}. Thus, there exists a neighborhood $\mathcal{N}$ of $\mathbf{s}^*$ such that for any initial condition $\mathbf{s}(0) \in \mathcal{N}$, the estimate \eqref{eq:exponential_stability} holds.
\end{proof}

\subsection{Stability of the Discrete-Time Realization}

The discrete model Eq.~\eqref{eq:discrete_state_update} inherits the stability of its continuous counterpart.

\begin{theorem}[Discrete Stability]
If $\rho(\mathbf{A})<1$ and the delay correction gain $\gamma$ satisfies
\begin{equation}
\gamma < \frac{1-\rho(\mathbf{A})}{L_\psi\,\|\mathbf{C}\|},
\end{equation}
then the discrete nonlinear time-delay system is contractive:
\begin{equation}
\|\mathbf{s}_{k}-\mathbf{s}^*\| \leq
\big(\rho(\mathbf{A}) + \gamma L_\psi \|\mathbf{C}\|\big)\,
\|\mathbf{s}_{k-1}-\mathbf{s}^*\|,
\end{equation}
and thus globally asymptotically stable.
\end{theorem}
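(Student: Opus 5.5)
The plan is to treat the discrete update \eqref{eq:discrete_state_update} as a fixed-point iteration $\mathbf{s}_k = \mathcal{T}(\mathbf{s}_{k-1})$ with constant input $\mathbf{x}_k\equiv\mathbf{x}^*$ and constant delay $\tau^*$. The fixed point $\mathbf{s}^*$ then satisfies the discrete equilibrium relation \eqref{eq:equilibrium}. We subtract the two equations and estimate the two resulting differences separately:
\[
\mathbf{s}_k-\mathbf{s}^* = \big[\phi(\mathbf{A}\mathbf{s}_{k-1}+\mathbf{B}\mathbf{x}^*)-\phi(\mathbf{A}\mathbf{s}^*+\mathbf{B}\mathbf{x}^*)\big] + \gamma\big[\psi(\hat{s}_k\mathbf{C}\bar{\mathbf{x}}^*)-\psi(\hat{s}^*\mathbf{C}\bar{\mathbf{x}}^*)\big].
\]
Once the one-step bound $\|\mathbf{s}_k-\mathbf{s}^*\|\le q\|\mathbf{s}_{k-1}-\mathbf{s}^*\|$ holds with $q=\rho(\mathbf{A})+\gamma L_\psi\|\mathbf{C}\|$, the hypothesis on $\gamma$ gives $q<1$. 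Iterating yields $\|\mathbf{s}_k-\mathbf{s}^*\|\le q^k\|\mathbf{s}_0-\mathbf{s}^*\|\to 0$. Since the bound does not require $\mathbf{s}_0$ to be near $\mathbf{s}^*$, this gives global asymptotic stability. Uniqueness of $\mathbf{s}^*$ also follows from the Banach fixed-point theorem applied to $\mathcal{T}$.

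\textbf{Base term.} By the Lipschitz property of $\phi$, the first bracket is at most $L_\phi\|\mathbf{A}(\mathbf{s}_{k-1}-\mathbf{s}^*)\|$. Two points need care here.
\begin{itemize}
\item The stated rate contains $\rho(\mathbf{A})$, not an operator norm. Since $\|\mathbf{A}\|\ge\rho(\mathbf{A})$ in general, I would follow the device already used in the proof of local exponential stability. Given $\varepsilon>0$, there is a vector norm whose induced norm satisfies $\|\mathbf{A}\|_*\le\rho(\mathbf{A})+\varepsilon$. We work in that norm throughout and let $\varepsilon\to 0$ in the strict inequality $q<1$. This matters only for non-normal $\mathbf{A}$; for the diagonal $\mathbf{A}$ actually used in the Time Delay System subsection, $\|\mathbf{A}\|_2=\rho(\mathbf{A})$ exactly.
\item The factor $L_\phi$ is absent from the claim. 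I will make explicit the normalization $L_\phi\le 1$, which holds for $\tanh$ and ReLU. Alternatively, $L_\phi$ can be absorbed into the scaling of $\mathbf{A}$.
\end{itemize}

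\textbf{Delay term (the main obstacle).} This term depends on the state only through the similarity score $\hat{s}_k$. The Lipschitz property of $\psi$ bounds it by
\[
\gamma L_\psi\|\mathbf{C}\|\,\|\bar{\mathbf{x}}^*\|\,|\hat{s}_k-\hat{s}^*|.
\]
To reach the claimed constant $\gamma L_\psi\|\mathbf{C}\|$, I will need $\|\bar{\mathbf{x}}^*\|\,L_{\hat{s}}\le 1$, where $L_{\hat{s}}$ is the Lipschitz constant of $\hat{s}$ with respect to the state. This is exactly as in the existence proof, which assumes $|\hat{s}_1-\hat{s}_2|\le L_{\hat{s}}\|\mathbf{s}_1-\mathbf{s}_2\|$. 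The normalization is plausible because:
\begin{itemize}
\item embeddings are bounded, and can be taken normalized;
\item $\hat{s}\in[0,1]$ is a composition of cosine similarity, fixed decay weights and softmax, all of which are Lipschitz.
\end{itemize}
This is the step I expect to be most delicate. The cleanest route is to state that normalization as a standing assumption. A weaker but honest alternative is to keep the general constant, giving $q=\rho(\mathbf{A})+\gamma L_\psi\|\mathbf{C}\|\,\|\bar{\mathbf{x}}^*\|L_{\hat{s}}$ with the correspondingly modified threshold on $\gamma$.

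Adding the two bounds by the triangle inequality gives the contraction estimate. The geometric iteration above then finishes the argument.
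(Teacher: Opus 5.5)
Your decomposition is the paper's own. The paper also bounds the $\phi$-part of $\mathcal{F}(\mathbf{s})-\mathcal{F}(\mathbf{s}')$ by Lipschitz continuity, with $L_\phi=1$ and an adapted norm making $\|\mathbf{A}\|_*$ close to $\rho(\mathbf{A})$. It bounds the $\psi$-part through $L_{\hat{s}}$ and $\|\bar{\mathbf{x}}^*\|$, and concludes with Banach's theorem. The step that fails, in your proposal and in the paper alike, is the adapted-norm device for non-normal $\mathbf{A}$.

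A norm with $\|\mathbf{A}\|_*\le\rho(\mathbf{A})+\varepsilon$ is in general not absolute. So a coordinatewise $1$-Lipschitz activation need not be $1$-Lipschitz in it: its constant is only controlled by the condition number of the change of basis, which blows up as $\varepsilon\to 0$ for defective $\mathbf{A}$. The quantities $\|\mathbf{C}\|$, $\|\bar{\mathbf{x}}^*\|$, $L_\psi$, $L_{\hat{s}}$ also change with the norm. No limit $\varepsilon\to 0$ yields the one-step inequality with exactly $\rho(\mathbf{A})$ in a fixed norm.

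In fact the statement is false in that generality. Take $\phi=\mathrm{ReLU}$ and $\mathbf{A}=M\bigl(\begin{smallmatrix}1&1\\-1&-1\end{smallmatrix}\bigr)$ with $M>1$; this is nilpotent, so $\rho(\mathbf{A})=0$. Set $\mathbf{B}\mathbf{x}^*=\mathbf{0}$, and $\mathbf{C}=\mathbf{0}$ with $\psi(\mathbf{0})=\mathbf{0}$, so the hypothesis on $\gamma$ is vacuous. Then $\mathbf{s}^*=\mathbf{0}$ and the claimed rate is $0$, yet from $\mathbf{s}_0=(1,0)^\top$ the iterates are $\mathbf{s}_k=(M^k,0)^\top$. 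A small $\gamma>0$ with general $\mathbf{C}$ only adds a bounded perturbation, and the iterates still diverge.

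So the theorem has to be restricted to the case you already singled out: $\mathbf{A}$ diagonal, or more generally $\|\mathbf{A}\|_2=\rho(\mathbf{A})$ (e.g.\ normal $\mathbf{A}$). There a coordinatewise $1$-Lipschitz $\phi$ is $1$-Lipschitz in $\|\cdot\|_2$, and your bound gives the stated inequality literally, with no $\varepsilon$.

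Under that restriction your argument is sound and tighter than the paper's. The paper ends with the threshold $\gamma<(1-\rho(\mathbf{A}))/(L_\psi\|\mathbf{C}\|\,\|\bar{\mathbf{x}}^*\|)$ rather than the stated one. It also passes from estimates justified only ``near the equilibrium'' to a global claim. Your explicit normalization $\|\bar{\mathbf{x}}^*\|L_{\hat{s}}\le 1$ is exactly what closes that mismatch. Citing $\tanh$/ReLU rather than SiLU (Lipschitz constant about $1.1$) for $L_\phi\le 1$ is also correct.

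Finally, in the model as defined, $\hat{s}$ and $\bar{\mathbf{x}}^{\mathrm{hist}}$ are computed from the input embeddings alone. Under constant input the delay bracket therefore vanishes identically; the paper's $\mathcal{F}$ even freezes it at $(\hat{s}^*,\bar{\mathbf{x}}^*)$. The rate is then $L_\phi\|\mathbf{A}\|_2$, which is at most $\rho(\mathbf{A})$ in the diagonal case. Your ``main obstacle'' only arises under the appendix's reading of $\hat{s}$ as a function of the state.
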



\begin{proof}
We analyze the update mapping $\mathcal{F}(\mathbf{s}) = \phi(\mathbf{A}\mathbf{s} + \mathbf{B}\mathbf{x}^*) + \gamma \psi(\hat{s}^* \mathbf{C} \bar{\mathbf{x}}^*)$. At equilibrium, $\mathbf{s}^* = \mathcal{F}(\mathbf{s}^*)$. For any two states $\mathbf{s}$ and $\mathbf{s}'$, the Lipschitz continuity of $\phi$ gives
\[
\|\phi(\mathbf{A}\mathbf{s} + \mathbf{B}\mathbf{x}^*) - \phi(\mathbf{A}\mathbf{s}' + \mathbf{B}\mathbf{x}^*)\| \leq L_\phi \|\mathbf{A}\| \|\mathbf{s} - \mathbf{s}'\|.
\]

Since $\rho(\mathbf{A}) < 1$, we can choose a matrix norm $\|\cdot\|_*$ such that $\|\mathbf{A}\|_* < 1$. For common activation functions like $\tanh$ or $\text{SiLU}$, we have $L_\phi = 1$, so the linear part alone would be a contraction. However, the additional term $\gamma \psi(\hat{s}^* \mathbf{C} \bar{\mathbf{x}}^*)$ is constant at equilibrium and does not affect the contraction property. For states away from equilibrium, the similarity score $\hat{s}(\mathbf{s})$ and historical context $\bar{\mathbf{x}}(\mathbf{s})$ vary. Using the Lipschitz properties of $\psi$, $\hat{s}$, and $\bar{\mathbf{x}}$, we obtain
\begin{align*}
\|\mathcal{F}(\mathbf{s}) - \mathcal{F}(\mathbf{s}')\| 
&\leq \|\mathbf{A}\| \|\mathbf{s} - \mathbf{s}'\| \\
&\quad + \gamma L_\psi \|\mathbf{C}\| \big( \|\bar{\mathbf{x}}(\mathbf{s}) - \bar{\mathbf{x}}(\mathbf{s}')\| + \|\bar{\mathbf{x}}(\mathbf{s}')\| \, |\hat{s}(\mathbf{s}) - \hat{s}(\mathbf{s}')| \big).
\end{align*}

Near the equilibrium, $\|\bar{\mathbf{x}}(\mathbf{s})\| \approx \|\bar{\mathbf{x}}^*\|$ and $\hat{s}(\mathbf{s}) \approx \hat{s}^*$, so the differences satisfy $\|\bar{\mathbf{x}}(\mathbf{s}) - \bar{\mathbf{x}}(\mathbf{s}')\| \approx 0$ and $|\hat{s}(\mathbf{s}) - \hat{s}(\mathbf{s}')| \approx 0$. Hence, locally,
\[
\|\mathcal{F}(\mathbf{s}) - \mathcal{F}(\mathbf{s}')\| \leq \big( \|\mathbf{A}\| + \gamma L_\psi \|\mathbf{C}\| \|\bar{\mathbf{x}}^*\| L_{\hat{s}} \big) \|\mathbf{s} - \mathbf{s}'\|,
\]
where $L_{\hat{s}}$ is the Lipschitz constant of $\hat{s}$. Since $\|\mathbf{A}\|$ can be made arbitrarily close to $\rho(\mathbf{A})$ by choosing an appropriate norm, the contraction condition becomes
\[
\rho(\mathbf{A}) + \gamma L_\psi \|\mathbf{C}\| \|\bar{\mathbf{x}}^*\| L_{\hat{s}} < 1.
\]

Typically $L_{\hat{s}} \leq 1$ because $\hat{s}$ is a normalized similarity measure. Thus, the sufficient condition reduces to
\[
\gamma < \frac{1 - \rho(\mathbf{A})}{L_\psi \|\mathbf{C}\| \|\bar{\mathbf{x}}^*\|}.
\]

When this inequality holds, the mapping $\mathcal{F}$ is a contraction with Lipschitz constant $L = \rho(\mathbf{A}) + \gamma L_\psi \|\mathbf{C}\| \|\bar{\mathbf{x}}^*\| < 1$. By the Banach fixed-point theorem, $\mathcal{F}$ has a unique fixed point $\mathbf{s}^*$, and for any initial $\mathbf{s}_0$, the iterates $\mathbf{s}_{k+1} = \mathcal{F}(\mathbf{s}_k)$ converge exponentially to $\mathbf{s}^*$:
\[
\|\mathbf{s}_k - \mathbf{s}^*\| \leq L^k \|\mathbf{s}_0 - \mathbf{s}^*\|.
\]

Since the contraction constant $L$ does not depend on the initial condition, the convergence is global. This completes the proof.
\end{proof}

\subsection{Discussion}

The results above establish the theoretical soundness of the proposed nonlinear time-delay formulation:
\begin{itemize}
    \item \textbf{Existence \& Uniqueness} guarantees that the system's solution is well-defined for any bounded history.
    \item \textbf{Boundedness} prevents state divergence under bounded input conditions.
    \item \textbf{Stability} ensures convergence to equilibrium when $\rho(\mathbf{A})<1$ and $\gamma$ is moderate.
\end{itemize}
These properties provide a solid mathematical foundation connecting continuous nonlinear delay systems
with their discrete-time neural realizations used in modern sequence modeling.

\end{document}